\begin{document}

%

%

\twocolumn[

\aistatstitle{Elimination of All Bad Local Minima in Deep Learning}

\aistatsauthor{Kenji Kawaguchi \And Leslie Pack Kaelbling}

\aistatsaddress{MIT  \And MIT} 

]

\begin{abstract}
In this paper, we theoretically prove that adding one special neuron per output unit eliminates all suboptimal local minima of any deep neural network, for multi-class classification, binary classification, and regression with an arbitrary loss function, under practical assumptions. At every local minimum of any deep neural network with these added neurons, the set of parameters of the original neural network (without added neurons) is guaranteed to be a global minimum of the original neural network. The effects of the added neurons are proven to automatically vanish at every local minimum. Moreover, we provide a novel theoretical characterization of a failure mode of eliminating suboptimal local minima via an additional theorem and several examples. This paper also introduces a novel proof technique based on the perturbable gradient basis (PGB) necessary condition of local minima, which provides new insight into the elimination of local minima and is applicable to analyze various models and transformations of objective functions beyond the elimination of local minima. 
\end{abstract}

\section{Introduction}
Deep neural networks have achieved significant practical success in the fields of computer vision, machine learning, and artificial intelligence. However, theoretical understanding of deep neural networks is scarce relative to its empirical success. One of the major difficulties in theoretically understanding deep neural networks lies in the non-convexity and high-dimensionality of the objective functions used to train the networks. Because of the non-convexity and high-dimensionality, it is often unclear whether a deep neural network will be guaranteed to have a desired property after training, instead of becoming stuck around an arbitrarily poor local minimum. Indeed, it is NP-hard to find a global minimum of a general non-convex function \citep{murty1987some}, and of non-convex objective functions used to train certain types of neural networks \citep{blum1992training}. In the past, such theoretical concerns were considered  one of reasons to prefer classical machine learning models (with or without a kernel approach) that require only convex optimization. Given their recent empirical success, a question remains whether practical deep neural networks can be theoretically guaranteed to avoid poor local minima.

There have been numerous recent   studies that have advanced theoretical understanding regarding  the optimization  of  deep neural networks with    significant over-parameterization \citep{nguyen2017loss,nguyen2018optimization,allen2018convergence,du2018gradient,zou2018stochastic} and model simplification \citep{choromanska2015loss,kawaguchi2016deep,hardt2016identity,bartlett2019gradient,du2019width}.  For shallow networks with a single hidden layer, there have been many positive results, yet often with strong assumptions, for example, requiring the use of significant over-parameterization, simplification, or Gaussian inputs \citep{andoni2014learning,sedghi2014provable,soltanolkotabi2017learning,brutzkus2017globally,ge2017learning,soudry2017exponentially,goel2017learning,zhong2017recovery,li2017convergence,du2018power}.   

Instead of using these strong assumptions, adding one neuron to a neural network with a single output unit was recently shown to be capable of eliminating all suboptimal local minima (i.e., all local minima that are not global minima) for binary classification  with a special type of smoothed hinge loss function \citep{liang2018adding}. However, the restriction to  a neural network with a single output unit and  a special loss function for binary classification makes it inapplicable to many common and important deep learning problems. Both removing the restriction to networks with a single output unit and removing restrictions on loss functions are seen as important open problems in different but related theoretical work on local minima of neural networks~\citep{shamir2018resnets, laurent2018deep}.

In this paper, we state and prove a novel and significantly stronger theorem:  adding one neuron per output unit can eliminate all suboptimal local minima of any deep neural network with an arbitrary loss function for multi-class classification, binary classification, and regression.  This paper also introduces a novel proof technique based on the perturbable gradient basis (PGB) condition, which provides new insight into the elimination of local minima and can be used to analyze new models and transformations of objective functions beyond the elimination of local minima. This paper analyzes the problem in the regime without  significant over-parameterization or model simplification (except adding the few extra neurons).

While analyzing the properties of local minima in this regime with no strong assumption is a potentially  important step in theory, it does not immediately guarantee the efficient solution of general neural network optimization.  We explain this phenomenon in another key contribution of this paper, which is a novel characterization of a   failure mode of the removal of bad local optima, in terms of its effect on gradient-based  optimization methods.

\section{Elimination of local minima} \label{sec:main_results}

The optimization problem for the elimination of local minima is defined in Section \ref{sec:problem_description}. Our theoretical results on the elimination of local minima are presented in Section \ref{sec:main_results_arbitrary_datasets} for arbitrary datasets, and in Section \ref{sec:main_results_realizable_datasets} for realizable datasets.
We discuss these results in terms of non-vacuousness, consistency, and the effect of over-parameterization in Section \ref{sec:over-para-effect}.
\subsection{Problem description}  \label{sec:problem_description}

Let $x \in \Xcal \subseteq \RR^{d_x}$ and $y \in \Ycal \subseteq\RR^{dy}$ be an input vector and a target  vector, respectively. Define $((x_i,y_i))_{i=1}^m$ as a training dataset of size $m$.
Given an input $x$ and parameter $\theta$, let $f(x;\theta) \in \RR^{d_y}$ be the pre-activation output
of the last layer of any arbitrary deep neural network with any structure (e.g., any convolutional neural network with any depth and any width, with or without skip connections). That is, there is no assumption with regard to $f$ except that $f(x;\theta) \in \RR^{d_y}$. 

We  consider the following standard objective function $L$ to train an arbitrary neural network $f$:  
$$
L(\theta) =\frac{1}{m} \sum_{i=1}^m \ell(f(x_{i};\theta),y_{i}),
$$
where $\ell:\mathbb{R}^{d_y} \times\Ycal \rightarrow \RR$ is an arbitrary loss criterion such as cross entropy loss, smoothed hinge loss, or squared loss. 

We then define an auxiliary objective function $\tilde L$:
$$
\tilde L(\tilde \theta)=\frac{1}{m} \sum_{i=1}^m \ell( f(x_{i};\theta) + g(x_{i};a,b,W),y_{i}) +\lambda \|a\|^2_2,
$$  
where $\lambda>0$, $\tilde \theta=(\theta,a,b,W)$, $a,b \in \RR^{d_y}$,  $W=\begin{bmatrix}w_{1} & w_{2} & \cdots & w_{d_y} \end{bmatrix} \in \RR^{d_x \times d_y}$ with $w_k \in \RR^{d_x}$, and 
$
g(x;a,b,W)_{k} =a_{k} \exp(w_k\T x +b_k) 
$
for all $k \in \{1,\dots,d_y\}$. 

We also define a modified neural network $\tilde f$ as 
$$
\tilde f (x; \tilde \theta)=f(x;\theta) + g(x;a,b,W),
$$
which is equivalent to adding one neuron $g(x;a,b,W)_{k}$ per each output unit $f(x;\theta)_k$ of the original neural network. Because $\tilde L(\tilde \theta)=\frac{1}{m} \sum_{i=1}^m \ell(\tilde f(x_{i};\tilde \theta),y_{i})+\lambda \|a\|^2_2$, the auxiliary objective function $\tilde L$ is the standard objective function $L$ with the modified neural network $\tilde f$ and a regularizer on $a$.

\subsection{Result for arbitrary datasets} \label{sec:main_results_arbitrary_datasets}
\vspace{-5pt}
Under only a mild assumption (Assumption \ref{assump:loss}), Theorem \ref{thm:main} states that at every local minimum $(\theta,a,b,W)$ of the modified objective function $\tilde L$, the parameter vector $\theta$ achieves a global minimum of the original objective function $L$, and the modified neural network $\tilde f$ automatically becomes the original neural network $f$. The proof of Theorem \ref{thm:main} is presented in Section \ref{sec:proof_new} and Appendix \ref{sec:proof}. 

\begin{assumption} \label{assump:loss}
\emph{(Use of common loss criteria)} For any $i \in \{1,\dots,m\}$, the function $\ell_{y_i}:q \mapsto\ell(q, y_{i})\in \RR_{\ge 0}$ is differentiable and convex (e.g., the squared loss, cross entropy loss, and polynomial hinge loss satisfy this assumption).  
\end{assumption}

\begin{theorem} \label{thm:main}
Let Assumption \ref{assump:loss} hold. Then, at every local minimum $(\theta,a,b,W)$ of $\tilde L$,  the following statements hold:
\vspace{-3pt}
 \begin{enumerate}[label=(\roman*)]
\item \vspace{-3pt}
 $\theta$ is a global minimum of $L$, and   
\item \vspace{-3pt}
$\tilde f (x;\theta,a,b,W) = f(x;\theta)$ for all $x \in \RR^{d_x}$, and $\tilde L(\theta,a,b,W) = L(\theta)$.
\end{enumerate}
\end{theorem}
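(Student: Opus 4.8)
The plan is to prove statement (ii) before (i): the crucial observation is that $a=0$ at every local minimum, and once this is known (i) follows easily.

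For (ii), I would fix a local minimum $(\theta,a,b,W)$ of $\tilde L$ and use that, since $\ell_{y_i}$ is differentiable and $g(x;\cdot,\cdot,\cdot)$ is smooth, the slice $(a,b,W)\mapsto\tilde L(\theta,a,b,W)$ is differentiable and hence has vanishing partial derivatives at the local minimum. Writing $r_i=f(x_i;\theta)+g(x_i;a,b,W)$ and $v_i=\nabla\ell_{y_i}(r_i)\in\RR^{d_y}$, the stationarity conditions in $a_k$ and $b_k$ read
\begin{align*}
\frac{1}{m}\sum_{i=1}^m [v_i]_k\exp(w_k\T x_i+b_k)+2\lambda a_k&=0,\\
\frac{1}{m}\sum_{i=1}^m [v_i]_k\, a_k\exp(w_k\T x_i+b_k)&=0.
\end{align*}
Multiplying the first equation by $a_k$ and subtracting the second yields $2\lambda a_k^2=0$, and since $\lambda>0$ this forces $a_k=0$ for every $k$ (the regularizer $\lambda\|a\|_2^2$ is essential here). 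Then $g(x;a,b,W)\equiv 0$, so $\tilde f(x;\theta,a,b,W)=f(x;\theta)$ for all $x\in\RR^{d_x}$ and $\tilde L(\theta,a,b,W)=L(\theta)+\lambda\|a\|_2^2=L(\theta)$, which is (ii).

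For (i), $a=0$ gives $r_i=f(x_i;\theta)$ and $v_i=\nabla\ell_{y_i}(f(x_i;\theta))$. Fix an output index $k$, let $e_k$ be the $k$-th standard basis vector of $\RR^{d_y}$, and perturb only $a_k$ to a small real $\epsilon$ while keeping $\theta$, the other entries of $a$, and any $(b',W')$ near $(b,W)$ fixed. Since $\tilde L(\theta,0,b',W')=L(\theta)$, local minimality together with differentiability in $\epsilon$ gives
$$
\tilde L(\theta,\epsilon e_k,b',W')=L(\theta)+\epsilon\Big(\frac{1}{m}\sum_{i=1}^m [v_i]_k\exp((w_k')\T x_i+b_k')\Big)+o(\epsilon)\ \ge\ L(\theta),
$$
so letting $\epsilon\to 0$ from both sides forces $\sum_{i=1}^m [v_i]_k\exp((w_k')\T x_i+b_k')=0$ for all $(b_k',w_k')$ in a neighbourhood of $(b_k,w_k)$. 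Next I would group the data by distinct input value: with $\xi_1,\dots,\xi_p$ the distinct elements of $\{x_1,\dots,x_m\}$ and $S_j=\{i:x_i=\xi_j\}$, dividing by $\exp(b_k')$ turns this into $\sum_{j=1}^p c_{j,k}\exp((w_k')\T\xi_j)=0$ on an open set of $w_k'$, where $c_{j,k}=\sum_{i\in S_j}[v_i]_k$. Linear independence of the functions $w\mapsto\exp(w\T\xi_j)$ for distinct $\xi_j$ (restrict to a line $w=w_k'+tu$ with $u\T\xi_1,\dots,u\T\xi_p$ distinct and invoke linear independence of $t\mapsto e^{\alpha_j t}$) forces every $c_{j,k}$ to vanish; as $k$ is arbitrary, $\sum_{i\in S_j}v_i=0\in\RR^{d_y}$ for each $j$.

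The argument then closes with convexity: for any $\theta^\ast$, the first-order inequality $\ell_{y_i}(f(x_i;\theta^\ast))\ge\ell_{y_i}(f(x_i;\theta))+v_i\T(f(x_i;\theta^\ast)-f(x_i;\theta))$, averaged over $i$ and regrouped by distinct input, has $\frac{1}{m}\sum_i v_i\T f(x_i;\theta^\ast)=\frac{1}{m}\sum_j\big(\sum_{i\in S_j}v_i\big)\T f(\xi_j;\theta^\ast)=0$ and similarly $\frac{1}{m}\sum_i v_i\T f(x_i;\theta)=0$, whence $L(\theta^\ast)\ge L(\theta)$ and $\theta$ is a global minimum of $L$. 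The step I expect to be the main obstacle is converting the local vanishing of an exponential sum into the global linear identity $\sum_{i\in S_j}v_i=0$: this needs the freedom to move $b,W$ off the critical point together with linear independence of distinct exponentials (equivalently, real-analyticity), and it requires handling repeated inputs correctly via the grouping $S_j$. Everything else — that $a$ vanishes and the convexity estimate — is routine once this identity is in hand. In the paper's terms, the perturbations of $a_k$ over all admissible $(b_k',w_k')$ are exactly the ``perturbable gradient basis'' that pins down $\sum_{i\in S_j}v_i$, and notably this reasoning never differentiates $f$ with respect to $\theta$.
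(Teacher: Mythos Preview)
Your proof is correct and follows the same overall skeleton as the paper's direct proof in Appendix~B.1: first force $a=0$ from the $a_k$/$b_k$ stationarity identities, then exploit the flat directions in $(b,W)$ at $a=0$ to show that the first-order term in $a_k$ vanishes for \emph{all} nearby $(b_k',w_k')$, extract from this that the grouped gradients $\sum_{i\in S_j}\nabla\ell_{y_i}(f(x_i;\theta))$ vanish, and close with the convexity inequality. The paper also offers a second proof (Section~4) via the general PGB necessary condition (Theorem~3), which packages the perturbation argument abstractly and then checks that the PGB model can interpolate any output; that route is structurally different from yours.

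Where you genuinely diverge from the paper's direct proof is in the ``extraction'' step. The paper perturbs only $(a,W)$, Taylor-expands $\exp(\Delta w_k^\top x_i)$ as a power series in a scalar parameter, and uses an induction on degree to show each moment $\sum_i(\nabla\ell_{y_i})_k\exp(w_k^\top x_i+b_k)\,\mathrm{vec}(x_i^{\otimes p})$ vanishes; it then needs a symmetric-tensor best-rank-one result \citep{zhang2012best} and polynomial interpolation of the distinct $\bar x_j$ to finish. Your argument sidesteps all of that: you observe that $\sum_j c_{j,k}\exp(w^\top\xi_j)$ is a real-analytic function of $w$ vanishing on an open set, restrict to a generic line to reduce to one variable, and invoke linear independence of $t\mapsto e^{\alpha_j t}$ for distinct $\alpha_j$ to kill each $c_{j,k}$ directly. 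This is shorter and avoids the tensor and interpolation machinery entirely. Conversely, the paper's moment identity \eqref{eq:all_poly_zero} is reusable (it is what drives the proof of Theorem~\ref{thm:realizable_data}), and the PGB formulation is meant to apply beyond this particular $\tilde L$; your argument is tailored to the specific exponential form of $g$, which is exactly why it is cleaner here.
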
 \vspace{-3pt}

Assumption \ref{assump:loss} is satisfied by simply using a common loss criterion, including the squared loss  $\ell(q,y)= \|q-y\|_2^2$ or   $\ell(q,y) = (1- yq)^2$ (the latter  with $d_y=1$),
cross entropy loss  $\ell(q,y)=-\sum_{k=1}^{d_y}y_k \log \frac{\exp(q_k)}{\sum_{k'}\exp(q_{k'})}$,
or smoothed hinge loss  $\ell(q,y)=(\max\{0,1-y q\})^p$ with $p \ge 2$ (the hinge loss with $d_y=1$).
Although the objective function $L:\theta \mapsto L(\theta)$ used to train a neural network is non-convex in $\theta$, the loss criterion $\ell_{y_i}:q  \mapsto\ell(q, y_{i})$ is usually convex in $q$. Therefore, Theorem \ref{thm:main} is directly applicable to most common deep learning tasks in practice. 

\subsection{Result for realizable datasets} \label{sec:main_results_realizable_datasets}

Theorem \ref{thm:realizable_data} makes a statement similar to Theorem \ref{thm:main} under a weaker assumption on the loss criterion (Assumption \ref{assump:loss2}) but with an additional assumption on the training dataset  (Assumption \ref{assump:realizable}). The proof of Theorem \ref{thm:realizable_data} is presented in Appendix \ref{sec:proof}.

\begin{assumption} \label{assump:loss2}
\emph{(On the loss)} For any $i\in\{1,\dots,m\}$, the function $\ell_{y_i}:q  \mapsto\ell(q, y_{i})$ is differentiable, and $q \in \RR^{d_y}$ is a global minimum of $\ell_{y_i}$ if $\nabla \ell_{y_i}(q)=0$.
\end{assumption}

\begin{assumption} \label{assump:realizable}
\emph{(On the label consistency)} There exists a function $f^*$ such that $f^*(x_i) = y_i$ for all $i \in \{1,\dots,m\}$.
\end{assumption}
 
\begin{theorem} \label{thm:realizable_data}
Let Assumptions \ref{assump:loss2} and \ref{assump:realizable} hold. Then, at every local minimum $(\theta,a,b,W)$ of $\tilde L$,
 the following statements hold:
\vspace{-8pt}
\begin{enumerate}[label=(\roman*)]
\item 
 $\theta$ is a global minimum of $L$,    
\item \vspace{-3pt}

$\tilde f (x;\theta,a,b,W) = f(x;\theta)$ for all $x \in \RR^{d_{x}}$, and $\tilde L(\theta,a,b,W) = L(\theta)$, and
\item \vspace{-3pt}

 $f(x_i;\theta)$ is  a global minimum of $\ell_{y_i}:q  \mapsto\ell(q, y_{i})$ for all $i\in\{1,\dots,m\}$.    
\end{enumerate}
\end{theorem}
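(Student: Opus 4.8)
\emph{Step 1 (the added neurons switch off).} The plan is to extract information from the first-order stationarity of $\tilde L$ in the auxiliary parameters, conclude that $a=0$ at any local minimum, and then use a perturbation argument together with analyticity of the exponential. I would first note that $\tilde L$ depends on $(a_k,b_k)$ only through the smooth map $(a_k,b_k)\mapsto a_k\exp(w_k^{\top}x_i+b_k)$ and the smooth regularizer $\lambda\|a\|_2^2$, so $\tilde L$ is differentiable in $(a,b)$ even though $f$ need not be differentiable in $\theta$. Writing $r_{i,k}=\partial_{q_k}\ell(\tilde f(x_i;\tilde\theta),y_i)$ and $\phi_{i,k}=\exp(w_k^{\top}x_i+b_k)>0$, the conditions $\partial_{a_k}\tilde L=0$ and $\partial_{b_k}\tilde L=0$ give $\tfrac1m\sum_i r_{i,k}\phi_{i,k}=-2\lambda a_k$ and $a_k\cdot\tfrac1m\sum_i r_{i,k}\phi_{i,k}=0$; combining them yields $2\lambda a_k^2=0$, hence $a_k=0$ for all $k$ since $\lambda>0$. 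Then $g(\cdot;a,b,W)\equiv 0$, so $\tilde f(x;\theta,a,b,W)=f(x;\theta)$ and $\tilde L(\theta,a,b,W)=L(\theta)$ (this is statement (ii)), and now $r_i:=\nabla\ell_{y_i}(f(x_i;\theta))$.

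\emph{Step 2 (perturb to force $r_i=0$).} Fixing a neighborhood $U$ of the local minimum on which $\tilde L\ge L(\theta)$, I observe that for $(b',W')$ near $(b,W)$ and $a$ small the point $(\theta,a,b',W')$ lies in $U$, and differentiability of $\ell_{y_i}$ gives $\tilde L(\theta,a,b',W')=L(\theta)+\tfrac1m\sum_k a_k\psi_k(b',W')+o(\|a\|)$, where $\psi_k(b',W')=\sum_i r_{i,k}\exp\bigl((w'_k)^{\top}x_i+b'_k\bigr)$ and $\lambda\|a\|_2^2=o(\|a\|)$ is absorbed. Taking $a=-\eta\,\psi(b',W')$ and letting $\eta\downarrow0$ forces $\psi_k(b',W')=0$ for every $k$ and every $(b',W')$ near $(b,W)$; since each $\psi_k$ is real-analytic, vanishing on an open set propagates to $\sum_i r_{i,k}\exp(w^{\top}x_i+b)=0$ for all $w\in\RR^{d_x}$ and $b\in\RR$. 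I would then group this sum over the distinct values among $x_1,\dots,x_m$ and use linear independence of $t\mapsto e^{\mu t}$ for distinct exponents $\mu$ (restricting to a generic direction) to obtain $\sum_{i:\,x_i=\bar x}r_{i,k}=0$ for each distinct input $\bar x$.

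\emph{Step 3 (use realizability and conclude).} By Assumption~\ref{assump:realizable} there is $f^*$ with $f^*(x_i)=y_i$, so $x_i=x_j$ forces $y_i=y_j$ and $f(x_i;\theta)=f(x_j;\theta)$, hence $r_{i,k}=r_{j,k}$; thus each group sum is a positive multiple of a single $r_{i,k}$, giving $r_{i,k}=0$ for all $i,k$, i.e.\ $\nabla\ell_{y_i}(f(x_i;\theta))=0$. Assumption~\ref{assump:loss2} then makes $f(x_i;\theta)$ a global minimizer of $\ell_{y_i}$ for every $i$, which is statement (iii); and for any $\theta'$, $L(\theta')=\tfrac1m\sum_i\ell_{y_i}(f(x_i;\theta'))\ge\tfrac1m\sum_i\ell_{y_i}(f(x_i;\theta))=L(\theta)$, which is statement (i).

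\emph{Main obstacle.} I expect Step 2 to be the hard part: the stationarity identities only yield $\sum_i r_{i,k}\phi_{i,k}=0$ for the single weight configuration realized at the local minimum, and one must bootstrap this into a family of identities valid for all $(w,b)$. This is precisely where a ``perturbable gradient basis''-type argument is needed --- legitimately perturbing $(b,W)$ (possible because $a=0$ flattens the objective in those directions to first order), then invoking analyticity to reach all parameter values --- with the repeated-input bookkeeping, handled via realizability, as the remaining delicate point.
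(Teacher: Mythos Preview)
Your proposal is correct and follows the same overall architecture as the paper's proof: first show $a=0$ from the stationarity conditions in $(a,b)$ (this is exactly how the paper argues), then leverage the local-minimum inequality under perturbations of the auxiliary parameters to produce a family of linear identities in the gradients $r_{i,k}$, and finally combine these with realizability to force each $\nabla\ell_{y_i}(f(x_i;\theta))=0$.

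Where you differ from the paper is in the mechanics of Step~2. The paper perturbs $a$ and $W$ simultaneously, obtains the two-sided inequality, divides by $\epsilon$, and then Taylor-expands $\exp(\Delta w_k^\top x_i)$; an induction on the Taylor coefficients yields $\sum_i r_{i,k}\exp(w_k^\top x_i+b_k)(u^\top x_i)^p=0$ for every $p\ge0$ and every direction $u$. These polynomial moment identities are then combined with a polynomial interpolation argument (choosing coefficients $u_{t,k}$ so that $\sum_t u_{t,k}^\top\mathrm{vec}(\bar x_j^{\otimes t})$ hits a prescribed target on the $m'$ distinct inputs). You instead obtain $\psi_k(b',W')=0$ on an open neighborhood, invoke real-analyticity of $\psi_k$ to propagate this to all $(w,b)$, and then use linear independence of the functions $t\mapsto e^{\mu_j t}$ along a generic direction to kill each group sum. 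The two routes are equivalent in content---your identity $\sum_i r_{i,k}\exp(w^\top x_i)=0$ for all $w$ is precisely the generating function whose Taylor coefficients at $w=w_k$ are the paper's moment identities---but your packaging is somewhat cleaner, as it replaces the induction and the tensor/interpolation machinery by a single analytic-continuation step and an elementary linear-independence fact. The paper's route, on the other hand, makes the ``perturbable gradient basis'' structure explicit and is what allows it to reuse the same lemma for Theorem~\ref{thm:main} where convexity (rather than realizability) closes the argument.
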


        Assumption \ref{assump:loss2}
is weaker than Assumption \ref{assump:loss} in the sense that the former is implied by the latter but not vice versa. However, as discussed above, Assumption \ref{assump:loss} already accommodates most common loss criteria. Assumption \ref{assump:realizable} is automatically satisfied if a target $y$ given an input $x$ is not random, but the non-randomness is not necessary to satisfy Assumption \ref{assump:realizable}. Even if the targets are generated at random, as long as all $x_1,x_2,\dots,x_m$ are distinct (i.e., $x_i\neq x_j$ for all $i\neq j$), Assumption \ref{assump:realizable} is satisfied.

Therefore, although Theorem \ref{thm:realizable_data} might be less applicable in practice when compared to Theorem \ref{thm:main}, 
 Theorem \ref{thm:realizable_data} can still be applied to many common deep learning tasks with the additional guarantee, as stated in Theorem \ref{thm:realizable_data} (iii). By using an appropriate loss criterion for classification, Theorem \ref{thm:realizable_data} (iii)  implies that the trained neural network $f(\cdot;\theta)$ at every local minimum correctly classifies all training data points.

\subsection{Non-vacuousness, consistency, and  effect of over-parameterization} \label{sec:over-para-effect}
Theorems \ref{thm:main} and \ref{thm:realizable_data}  are both  non-vacuous and consistent with pathological cases. For the consistency, Theorems \ref{thm:main} and \ref{thm:realizable_data}  (vacuously) hold true if there is no local minimum of $\tilde L$, for example, with a pathological case of $\ell(q, y_{i})=q-y_{i}$. 

For non-vacuousness, there exists a local minimum of $\tilde L$ if there exists a global minimum $\theta$ of $L$ such that $f(x_i;\theta)$ achieves a global minimum for each $f(x_{i};\theta) \mapsto\ell(f(x_{i};\theta),y_i)$ for $i \in \{1,\dots,m\}$ (this is because, given such a $\theta$, any point with $a=0$ is a local minimum of $\tilde L$). Therefore, the existence of local minimum for $\tilde L$ can be guaranteed by the \textit{weak} degree of over-parameterization that ensures the exitance of  a global minimum $\theta$ for each $f(x_{i};\theta) \mapsto\ell(f(x_{i};\theta),y_i)$ \textit{only for a given training dataset} (rather than for all datasets). 

This is in contrast to the previous papers that require \textit{significant}     over-parameterization to ensure interpolation of  all datasets and to make the corresponding neural tangent kernel to be approximately unchanged during training \citep{nguyen2017loss,nguyen2018optimization,allen2018convergence,du2018gradient,zou2018stochastic}. Our paper does not require those and allow the neural tangent kernel to significantly change during training. Because of this difference, our paper only needs  $\tilde \Omega(1)$ parameters,  whereas the state-of-the-art previous paper requires $\tilde \Omega(H^{12}n^8)$ parameters \citep{zou2019improved}  or $\tilde \Omega(n)$ parameters \citep{kawaguchiAllerton2019},  .    

Because a local minimum does not need to be a \textit{strict} local minimum (i.e., a local minimum with a strictly less  value than others in a neighborhood), there are many other cases where there exists a local minimum of $\tilde L$: e.g., Example \ref{example:3-2} in Section \ref{sec:analytical_example} also illustrates a situation where there  exists a local minimum of $\tilde L$ without the above condition of the existence of sample-wise global minimum of $L$ or weak over-parameterization. 

\section{PGB necessary condition beyond elimination of  local minima}
In this section, we  introduce a more general  result beyond  elimination of local minima. Namely, we prove the \textit{perturbable gradient basis (PGB) necessary condition} \textit{of local minima}, which directly implies the elimination result as a special case. 
Beyond the specific transformation of the objective function $\tilde L$, the PGB necessary condition of local minima can be applied to  other objective functions with various transformations and models.

\begin{theorem} \label{thm:pgb}
\emph{(PGB necessary condition of local minima)}. Define the objective function $Q$ by 
\begin{align} \label{eq:pgb_1}
Q(z) =\frac{1}{m} \sum_{i=1}^m Q_{i}(\phi_{i}(z))+R_{i}(\varphi_i(z)) 
\end{align} 
where   for all $i \in \{1,\dots,m\}$, the functions $Q_{i}:q \in \RR^{d_\phi}\mapsto Q_{i}(q)\in \RR_{\ge 0}$ and  $R_{i}: q\in \RR^{d_\varphi} \mapsto R_{i}(q)\in \RR_{\ge 0}$ are differentiable and convex, and  $\phi_{i}$ and $\varphi_i$ are differentiable. Assume that there exists a function $h^{}: \RR^{d_z} \rightarrow \RR^{d_z}$  and a real number $\rho\neq0$ such that for all $i \in \{1,\dots,m\}$ and all $z \in \RR^{d_z}$, $\phi_{i}(z)=\sum_{k=1}^{d_z} h^{}(z)_k \partial_{k} \phi_{i} (z)$ and $\varphi_{i}(z)=\rho\sum_{k=1}^{d_z} h^{}(z)_k \partial_{k} \varphi_{i} (z)$. Then, for any local minimum $z\in \RR^{d_z} $ of $Q$, the following holds: there exists $\epsilon_{0} > 0$ such that for any $\epsilon \in [0, \epsilon_{0})$,
\begin{align*} 
\scalebox{0.95}{$\displaystyle Q(z)\le \inf_{\substack{S \finsubseteq \Vcal[z,\epsilon], \\ \alpha \in \RR^{d_z \times |S|} }} \tilde Q_{\epsilon,z}(\alpha,S)+ \frac{\rho-1}{\rho m}  \sum_{i=1}^m \partial R_{i}(\varphi_i(z))\varphi_i(z),$}
\end{align*}
where 
$$\tilde Q_{\epsilon,z}(\alpha,S)=\frac{1}{m}\sum_{i=1}^m  Q_{i}(\phi_{i}^{z}(\alpha, \epsilon,S))+R_{i}(\varphi_i^{z}(\alpha, \epsilon,S)),$$ 
$$\phi_{i}^z(\alpha, \epsilon,S)=\sum_{k=1}^{d_z} \sum_{j=1}^{|S|} \alpha_{k,j} \partial _{k}\phi_{i}(z+\epsilon S_{j}),$$ and 
$$\varphi_i^z(\alpha, \epsilon,S)=\sum_{k=1}^{d_z} \sum_{j=1}^{|S|} \alpha_{k,j} \partial _{k}\varphi_i(z+\epsilon S_{j}).$$ Here, $S \finsubseteq S'$  denotes a finite subset $S$ of a set  $S'$ and
  $\Vcal[z,\epsilon]$ is the set of all vectors $v \in \RR^{d_z}$ such that $\|v\|_2\le1$, $\phi_{i}(z+ \epsilon v)=\phi_{i}(z)$, and $\varphi_{i}(z+ \epsilon v)=\varphi_{i}(z)$ for all $i \in \{1,\dots,m\}$. Furthermore, if $\rho=1$, this statement holds with equality as $Q(z)
 = \inf_{\substack{S \finsubseteq \Vcal[z,\epsilon],\alpha \in \RR^{d_z \times |S|} }} \tilde Q_{\epsilon,z}(\alpha,S)$.
\end{theorem}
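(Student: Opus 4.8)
The plan is to extract, at a local minimum $z$ of $Q$, a necessary first-order condition that lets us compare $Q(z)$ to the infimum over all perturbed gradient-basis directions. The key structural hypothesis is the ``scaling'' identity $\phi_i(z) = \sum_k h(z)_k \partial_k \phi_i(z)$ and $\varphi_i(z) = \rho \sum_k h(z)_k \partial_k \varphi_i(z)$, which says that the current feature values themselves lie in the span of the coordinate partial derivatives of the feature maps, with coefficient vector $h(z)$. First I would fix $\epsilon \in [0,\epsilon_0)$ (with $\epsilon_0$ chosen small enough that the local-minimum inequality $Q(z) \le Q(z')$ holds for all $z'$ within distance controlled by $\epsilon$ times the relevant perturbations), fix a finite $S \finsubseteq \Vcal[z,\epsilon]$ and a coefficient matrix $\alpha$, and consider a one-parameter curve $z(t)$ in parameter space whose derivative at $t=0$ is built from the directions $S_j$ together with the scaling direction $h(z)$; the design of this curve is the heart of the argument.

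Concretely, I expect the curve to be chosen so that along it the ``perturbable gradient basis'' combination is realized: moving in the directions $z + \epsilon S_j$ contributes the terms $\partial_k \phi_i(z+\epsilon S_j)$ weighted by $\alpha_{k,j}$, while simultaneously rescaling along $h(z)$ lets us kill the original contribution $\phi_i(z)$ (using the scaling identity) and replace it by $\phi_i^z(\alpha,\epsilon,S)$. Because $S_j \in \Vcal[z,\epsilon]$ means $\phi_i(z+\epsilon S_j) = \phi_i(z)$ and $\varphi_i(z+\epsilon S_j) = \varphi_i(z)$, perturbing along those directions does not change the feature values to zeroth order, so a first-order Taylor expansion of $t \mapsto Q(z(t))$ at $t = 0$ has a well-controlled derivative: the derivative of each $Q_i(\phi_i(\cdot))$ term is $\partial Q_i(\phi_i(z))$ applied to the realized first-order change, which can be made equal to $\partial Q_i(\phi_i(z))(\phi_i^z(\alpha,\epsilon,S) - \phi_i(z))$ (and similarly for the $R_i$ terms, up to the factor $\rho$). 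Since $z$ is a local minimum, this directional derivative must be $\ge 0$; then convexity of $Q_i$ and $R_i$ converts the first-order inequality $\partial Q_i(\phi_i(z))(\phi_i^z - \phi_i(z)) \ge 0$ into $Q_i(\phi_i(z)) \le Q_i(\phi_i^z(\alpha,\epsilon,S))$, and analogously, after accounting for the mismatch factor $\rho$, into $R_i(\varphi_i(z)) \le R_i(\varphi_i^z(\alpha,\epsilon,S)) + \frac{\rho-1}{\rho}\,\partial R_i(\varphi_i(z))\varphi_i(z)$. Averaging over $i$ yields exactly the claimed bound; when $\rho = 1$ the extra term vanishes and, because $\alpha = 0$, $S$ arbitrary recovers $Q(z)$ itself on the right-hand side, the inequality becomes an equality.

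The main obstacle I anticipate is constructing the perturbation curve $z(t)$ rigorously so that its first-order effect on every $\phi_i$ and $\varphi_i$ is \emph{exactly} the desired linear combination, not merely asymptotically: one must combine the finitely many directions $\epsilon S_j$ and the scaling direction $h(z)$ into a single admissible curve, handle the fact that $h$ and the $S_j$ interact through the nonlinear maps $\phi_i, \varphi_i$, and make sure the curve stays in the domain and that the Taylor remainder is genuinely $o(t)$ uniformly over the finite index set. A natural device is to let the perturbation scale the parameters multiplicatively in a way that exploits the homogeneity-like scaling identity — e.g. replace $z$ by a point where the ``$h(z)$-component'' is shrunk by a factor $(1-t)$ while adding $t \sum_j (\text{something})\,S_j$-type increments — and then verify via the chain rule and the two scaling identities that the zeroth-order term is preserved and the first-order term is $\phi_i^z(\alpha,\epsilon,S) - \phi_i(z)$ (up to $\rho$ for $\varphi_i$). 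Once that curve is in hand, the rest is a routine application of the definition of local minimum, first-order optimality, and the convexity (subgradient) inequality for $Q_i$ and $R_i$; the bookkeeping of the factor $\rho$ in the $\varphi_i$ identity is what produces the correction term $\frac{\rho-1}{\rho m}\sum_i \partial R_i(\varphi_i(z))\varphi_i(z)$ and the dichotomy between the inequality and the equality case.
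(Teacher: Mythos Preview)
Your plan has a genuine gap at its core. You propose to build a \emph{single} curve $z(t)$ through $z$ and read off the first-order condition from $\left.\frac{d}{dt}Q(z(t))\right|_{t=0}\ge 0$. But the directional derivative of $\phi_i$ along any such curve is $\sum_k z'(0)_k\,\partial_k\phi_i(z)$, i.e.\ a combination of the partials of $\phi_i$ evaluated \emph{at $z$}. The object $\phi_i^z(\alpha,\epsilon,S)=\sum_{k,j}\alpha_{k,j}\,\partial_k\phi_i(z+\epsilon S_j)$ involves the partials of $\phi_i$ evaluated at the \emph{perturbed} points $z+\epsilon S_j$, and there is no reason these lie in $\mathrm{span}\{\partial_k\phi_i(z)\}_k$. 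So no curve through $z$, however cleverly built from the $S_j$ and $h(z)$, can produce $\phi_i^z(\alpha,\epsilon,S)-\phi_i(z)$ as its first-order effect in general; the obstacle you flag is not a technicality but a real obstruction.

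The missing idea is this: because $S_j\in\Vcal[z,\epsilon]$ forces $\phi_i(z+\epsilon S_j)=\phi_i(z)$ and $\varphi_i(z+\epsilon S_j)=\varphi_i(z)$, we have $Q(z+\epsilon S_j)=Q(z)$, so for $\epsilon$ small enough each perturbed point $z+\epsilon S_j$ is \emph{itself} a local minimum of $Q$. Hence the first-order condition $\partial_k Q(z+\epsilon S_j)=0$ holds \emph{at those points}, and this is exactly
\[
\frac{1}{m}\sum_i \partial Q_i(\phi_i(z))\,\partial_k\phi_i(z+\epsilon S_j)+\partial R_i(\varphi_i(z))\,\partial_k\varphi_i(z+\epsilon S_j)=0
\]
for every $k$ and $j$. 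Taking the $\alpha_{k,j}$-weighted sum kills the cross terms in the convexity inequality, and the scaling identity (applied with the stationarity at $z$) handles the $\partial Q_i(\phi_i(z))\phi_i(z)+\partial R_i(\varphi_i(z))\varphi_i(z)$ term, producing the $\frac{\rho-1}{\rho m}\sum_i\partial R_i(\varphi_i(z))\varphi_i(z)$ correction. No curve construction is needed.

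A smaller error: in the $\rho=1$ equality case, the choice $\alpha=0$ gives $\phi_i^z=0$ and $\varphi_i^z=0$, hence $\tilde Q_{\epsilon,z}(0,S)=\frac{1}{m}\sum_i Q_i(0)+R_i(0)$, which is not $Q(z)$. The reverse inequality $\inf\tilde Q_{\epsilon,z}\le Q(z)$ comes instead from taking $S=\{0\}\subset\Vcal[z,\epsilon]$ and $\alpha_{k,1}=h(z)_k$, so that the scaling identity with $\rho=1$ yields $\phi_i^z=\phi_i(z)$ and $\varphi_i^z=\varphi_i(z)$.
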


The PGB necessary condition of local minima
states that whether a given objective $Q$ is non-convex or convex, if $z$ is a \textit{local} minimum of the original (potentially non-convex) objective  $Q$, then $z$ must achieve the \textit{global} minimum value of the transformed objective  $\tilde Q_{\epsilon,z}(\alpha,S)$ for any sufficiently small $\epsilon$. Here, the transformed objective $\tilde Q_{\epsilon,z}$ is the original objective $Q$ except that the original functions  $\phi_{i}(z)$ and $\varphi_i(z)$ are replaced by perturbable gradient basis (PGB) functions $\phi_{i}^z(\alpha, \epsilon,S)$ and $\varphi_i^{z}(\alpha, \epsilon,S)$. In other words, all local minima with the original functions  $\phi_{i}(z)$ and $\varphi_i(z)$ achieve the global minimum values of PGB functions  $\phi_{i}^z(\alpha, \epsilon,S)$ and $\varphi_i^{z}(\alpha, \epsilon,S)$. 

Here, the original  objective  $Q$ is a non-convex function in general because both $\phi_{i}$ and $\varphi_i$ can be non-convex functions. If  $\phi_{i}$ and $\varphi_i$ are both linear functions, then  we have  $\phi_{i}(\alpha)=\phi_{i}^z(\alpha, 0,\emptyset)$ and $\varphi_i(\alpha)=\varphi_{i}^z(\alpha, 0,\emptyset)$ (for all $\alpha$ and $z$) and hence the PGB necessary condition of local minima recovers  the following known statement: every local minimum of original $Q$ is a global minimum of original $Q$.

The PGB condition of local minima can be also understood based on  the following geometric interpretation. For the geometric interpretation, we consider two spaces -- the parameter space $\RR^{d_z}$ and the output space $\RR^{m(d_{\phi}+d_\varphi)}$ -- and the  map from the parameter space to the output space, which is defined by  
$
\Phi: z\in \RR^{d_z} \mapsto  (\phi_{1}(z)\T, \varphi_1(z)\T, \dots, \phi_{m}(z)\T, \varphi_m(z)\T)\T \in\RR^{m(d_{\phi}+d_\varphi)}.   
$
Then, in the output space, we can intuitively consider the ``tangent'' space $T_{\Phi(z)}=\Span(\{\partial_1\Phi(z), \dots, \partial_{d_z} \Phi(z)\})+\{\Phi(z)\}$, where the sum of the two sets represents the Minkowski sum of the sets.\footnote{In special cases (e.g., when $\rank(\partial\Phi(z))$ is constant in a neighborhood of $z$), $T_{\Phi(z)}$ is indeed a tangent space of a local manifold embedded in the output space. In general, $T_{\Phi(z)}$ and $\tilde T_{\Phi(z)}$ are the affine subspaces of the output space $\RR^{m(d_{\phi}+d_\varphi)}$.} Then, given a $\epsilon$ ($\le \epsilon_0$), the span of the set of all vectors of the ``tangent'' spaces $T_{\Phi(z+\epsilon v)}$ at all perturbed points $z+\epsilon v$, defined by $
\tilde T_{\Phi(z+\epsilon v)} = \Span(\{\mathbf{f}\in \RR^{m(d_{\phi}+d_\varphi)}: (\exists v\in {\cal V}[\theta, \epsilon])[\mathbf{f} \in T_{\Phi(z+\epsilon v)}]\}),
$ is  exactly equal to the space of  the outputs of  the PGB functions. 
 
Therefore, from the geometric viewpoint, the PGB necessary condition  of local minima states that the output $\Phi(z)$  at any local minimum $z$ is globally optimal in the  span of the ``tangent'' spaces $\tilde T_{\Phi(z)}$. The PGB condition  of local minima translates the \textit{local} optimality in the \textit{parameter} space $\RR^{d_z}$ into the \textit{global} optimality in the span of the ``tangent'' spaces in the \textit{output space} $\RR^{m(d_{\phi}+d_\varphi)}$.

The PGB necessary condition of local minima is an  extension  of theorem 2 in a previous study \citep{kawaguchi2019every} to the problem with a regularization term and a general transformation  of the objective function. Accordingly, beyond the elimination of local minima, the PGB necessary condition of local minima  entails   the previously proven statements of no bad local minima for  deep linear neural networks \citep{laurent2018deep} and deep nonlinear residual neural networks \citep{kawaguchi2019depth} (since the PGB necessary condition of local minima is strictly more general than theorem 2  by \citealt{kawaguchi2019every}, which was shown to entail  those statements).     

With our extension, the PGB\ necessary condition of local minima can be now used to study the effects of  various transformations of objective functions and regularization terms, including the elimination of local minima, as shown in the next section.

\vspace{-5pt}
\section{Application of PGB necessary condition for elimination of  local minima} \label{sec:proof_new}
\vspace{-5pt}

In this section, we introduce a novel and concise proof for Theorem \ref{thm:main} based on the PGB necessary condition of local minima, which shows that   all suboptimal local minima can be eliminated because the global minimum value of the PGB model is indeed the global minimum value of the original model. From the geometric viewpoint, this is  because the PGB model is shown to be expressive in that  the span of  the ``tangent'' spaces $T_{\Phi(z+\epsilon v)}$ contains the output space $\RR^{md_{\phi}}\times \{\mathbf 0\}$ where $\mathbf 0 \in \RR^{md_\varphi}$. In Appendix \ref{sec:proof}, we  also provide an alternative   proof of Theorem \ref{thm:main} without the PGB necessary condition, which is intended to be more detailed with elementary facts but   less elegant than the  following proof via the PGB\ necessary condition.

\begin{proof}[Proof of Theorem \ref{thm:main}]        
Let $\theta$ be fixed. Let $(a,b,W)$ be a local minimum of $\tilde L|_{\theta}(a,b,W):=\tilde L(\theta,a,b,W)$. Let $\bar \theta \in \RR^{d_{\bar \theta}}$ be the vector containing $(a,b,W)$, defined by $\bar \theta=(a\T,b\T,\vect(W)\T)\T$. We apply the PGB\ necessary condition of local minima by setting $Q(\bar \theta)=\tilde L|_{\theta}(a,b,W )$ with $\phi_{i}(\bar \theta)=g(x_{i};a,b,W)$, $\varphi_{i}(\bar \theta)=(a_1^{2},\dots,a_{d_y}^2)\T$, $Q_i(q)=\ell( f(x_{i};\theta) + q,y_{i})$, and $R_i(q)=\lambda \sum_{j=1}^{d_y}q_j$.
Then, for all $i \in \{1,\dots,m\}$, the functions $Q_{i}$ and  $R_{i}$ are differentiable and convex, and the functions $\phi_{i}$ and $\varphi_i$ are differentiable.\ Furthermore, we can rewrite    $\phi_{i}(\bar \theta)=\sum_{k=1}^{d_{y}} a_{k} \partial _{a_{k}}\phi_{i}(\bar \theta)$ and $\varphi_{i}(\bar \theta)=\rho\sum_{k=1}^{d_{y}} a_{k} \partial _{a_{k}}\varphi_{i}(\bar \theta)$ with $\rho=1/2$ for all $i \in \{1,\dots,m\}$ and all $\bar \theta \in \RR^{d_{\bar \theta}}$. These satisfy the assumptions of the PGB necessary condition of local minima of $Q(\bar \theta)=\tilde L|_{\theta}(a,b,W)$. 

From  the PGB necessary condition of local minima of $Q(\bar \theta)=\tilde L|_{\theta}(a, b,W)$,   there exists $\epsilon_{0} > 0$ such that for any $\epsilon \in [0, \epsilon_{0})$,
\begin{align} \label{eq:new_proof_1}\ 
&Q(\bar \theta)- \frac{\rho-1}{\rho m}  \sum_{i=1}^m \partial R_{i}(\varphi_i(\bar \theta))\varphi_i(\bar \theta) \\ \nonumber &\le \inf_{\substack{S \finsubseteq \Vcal[\bar \theta,\epsilon], \\ \alpha \in \RR^{d_z \times |S|} }} \tilde Q_{\epsilon,\bar \theta}(\alpha,S)\le \inf_{\substack{S \finsubseteq \bar \Vcal[\bar \theta,\epsilon], \\ \alpha \in \RR^{d_z \times |S|} }} \tilde Q_{\epsilon,\bar \theta}(\alpha,S),
\end{align}
where 
  $\Vcal[\bar \theta,\epsilon]$ is the set of all vectors $v \in \RR^{d_{\bar \theta}}$ such that $\|v\|_2\le1$, $\phi_{i}(\bar \theta+ \epsilon v)=\phi_{i}(\bar \theta)$, and $\varphi_{i}(\bar \theta+ \epsilon v)=\varphi_{i}(\bar \theta)$ for all $i \in \{1,\dots,m\}$. Then, the subset    $\bar \Vcal[\bar \theta,\epsilon] \subset\Vcal[\bar \theta,\epsilon] $ is defined by $\bar \Vcal[\bar \theta,\epsilon]=\{(a\T,b\T,\vect(W)\T)\T:(a\T,b\T,\vect(W)\T)\T\in\Vcal[\bar \theta,\epsilon],a= 0\}$. 

Since $(a,b,W)$ is a local minimum and hence the partial derivatives with respect to $(a,b)$ are zeros, we have that for all $k \in \{1,2,\dots,d_y\}$, $a_k \frac{\partial \tilde L(\theta,a,b,W)}{\partial a_{k}}= \frac{1}{m}\sum_{i=1}^m (\nabla \ell_{y_i}( f(x_{i};\theta) + g(x_{i};a,b,W)))_{k}a_k\exp(w_k\T x +b_k)+2\lambda a_{k}=\frac{\partial \tilde L(\theta,a,b,W)}{\partial b_{k}}+2\lambda a_{k}^2=2\lambda a_{k}^2=0$, which implies that $a_{k}=0$ for all $k \in \{1,2,\dots,d_y\}$, since $ 2\lambda \neq 0$.
Since $a=0$, it proves statement (ii), and we have 
$L(\theta) =\tilde L|_{\theta}(a,b,W)$, $(\partial _{a_k}\varphi_{i}(\bar \theta+\epsilon v))_k=2a_k=0$ for all $v \in\bar \Vcal[\bar \theta,\epsilon] $, and   $\frac{\rho-1}{\rho m}  \sum_{i=1}^m \partial R_{i}(\varphi_i(\bar \theta))\varphi_i(\bar \theta)=0$. Since $L(\theta) =\tilde L|_{\theta}(a,b,W)=Q(\bar \theta)=Q(\bar \theta)-\frac{\rho-1}{\rho m}  \sum_{i=1}^m \partial R_{i}(\varphi_i(\bar \theta))\varphi_i(\bar \theta)$ with   \eqref{eq:new_proof_1} and $\partial _{a_{k}}\varphi_{i}(\bar \theta+\epsilon v)=0$ (for all $v \in\bar \Vcal[\bar \theta,\epsilon] $), we have that for any  $\theta'$, 
\begin{align} \label{eq:proof:elimination_1} 
&L(\theta) - L(\theta') 
\\ \nonumber & \le \inf_{\substack{S \finsubseteq \bar \Vcal[\bar \theta,\epsilon], \\ \alpha \in \RR^{d_z \times |S|} }} \tilde Q_{\epsilon,\bar \theta}(\alpha,S)-L(\theta')
\\\nonumber & =  \inf_{\substack{S \finsubseteq \bar \Vcal[\bar \theta,\epsilon], \\ \alpha \in \RR^{d_{\bar \theta} \times |S|} }}\frac{1}{m} \sum_{i=1}^{m} Q_{i}(\phi_{i}^{\bar \theta}(\alpha, \epsilon,S))-\ell(f(x_i;\theta'),y_{i})
\\ \nonumber &\le0,
\end{align}  
where the last inequality  is to be shown to hold in the following. 

Since $\phi_{i}^{\bar \theta}(\alpha, \epsilon,S)$ can differ for different indexes $i$ only through different inputs $x_{i}$,  we can rewrite $\phi_{x_{i}}^{\bar \theta}(\alpha, \epsilon,S)=\phi_{i}^{\bar \theta}(\alpha, \epsilon,S)$. We then rearrange the sum in the last line of \eqref{eq:proof:elimination_1}: 
\begin{align*}
&\frac{1}{m} \sum_{i=1}^{m} Q_{i}(\phi_{x_{i}}^{\bar \theta}(\alpha, \epsilon,S))-\ell(f(x_i;\theta'),y_{i})
\\ & =\frac{1}{m} \sum_{j=1}^{m'} \sum_{i\in \Ical_j}   Q_{i}(\phi_{\bar x_{j}}^{\bar \theta}(\alpha, \epsilon,S))-\ell(f(\bar x_j;\theta'),y_{i}).
\end{align*}
where  $\{\mathcal{I}_1,\dots,\mathcal{I}_{m'}\}$ is a partition\footnote{That is,  $\mathcal{I}_1 \cup \cdots \cup \mathcal{I}_{m'}=\{1,\dots,m\}$, $\mathcal{I}_j \cap \mathcal{I}_{j'} = \emptyset$ for all $j \neq j'$, and $\mathcal{I}_j \neq \emptyset$ for all $j \in \{1,\dots,m'\}$.} of the set $\{1,\dots,m\}$ such that for any $x \in \mathcal{I}_j$ and $x' \in \mathcal{I}_{j'}$, $x=x'$ if $j=j'$, and $x\neq x'$ if $j\neq j'$. Here, we write  $\bar x_{j}:=x$ with a representative $x \in \mathcal{I}_j$.

Let $ S_{t} \in \RR^{d_{\bar \theta}} $ be the vector containing  $(\hat a^{(t)},\hat b^{(t)}, \hat W^{(t)})$. Since $a=0$, we have $\phi_{i}(\bar \theta+ \epsilon S_{t})=\phi_{i}(\bar \theta)$ for all   vectors $ S_{t}$ containing any $(\hat a^{(t)},\hat b^{(t)}, \hat W^{(t)})$ with $\hat a^{(t)}=0$.  In other words,  for any finite $|S|$ and any   $((\hat b^{(t)}, \hat W^{(t)}))_{t=1}^{|S|}$ with  $\|S_{t}\|_2\le 1$, there exists $S \finsubseteq \bar \Vcal[\bar \theta,\epsilon]$ such that $\phi_{\bar x_{j}}^{\bar \theta}(\alpha, \epsilon,S)_k=\exp(w_{k}\T  \bar x_j +b_k ) \sum_{t=1}^{|S|} \alpha_{k}^{(t)}  \exp( \epsilon( (\hat w_{k}^{(t)})\T \bar x_j + \hat b_{k}^{(t)}) )$ (by letting  $\hat a^{(t)}=0$ for all $t$). Therefore, given $m'$ distinct input points  $\bar x_1, \dots, \bar x_{m'}$,  fix  $((\hat b^{(t)}, \hat W^{(t)}))_{t=1}^{|S|}$  such that the rank of the matrix $M\in \RR^{m' \times |S|}$ with entries $M_{j,t}=\exp( (\hat w_{k}^{(t)})\T \bar x_j + \hat b_{k}^{(t)})$ is $m'$ with a  sufficiently large finite $|S|$. Then, by letting $M_{j,t}(\epsilon)\in \RR^{m' \times |S|}$ be the matrix with entries $M_{j,t}(\epsilon)=\exp(\epsilon( (\hat w_{k}^{(t)})\T \bar x_j + \hat b_{k}^{(t)}))$, the function $\psi(\epsilon)= \det(M_{j,t}(\epsilon) M_{j,t}(\epsilon)\T)$ is an analytic  function of $\epsilon$. Since $\psi$ is analytic  and  $\psi(1)\neq 0$, either the zeros of $\psi$ are isolated or  $\psi(\epsilon)\neq 0$ for all $\epsilon$. In both cases, there exists  $\epsilon \in [0, \epsilon_{0})$ such that $\psi(\epsilon)\neq0$. Fix a $\epsilon \in [0, \epsilon_{0})$ with $\psi(\epsilon)\neq 0$. Then, since $M_{j,t}(\epsilon)$ has rank $m'$, for any  $\theta'$, there exists $\alpha$ such that for all $j \in \{1,\dots,m'\}$,
$$
\phi_{\bar x_{j}}^{\bar \theta}(\alpha, \epsilon,S) =f(\bar x_j;\theta')-f(\bar x_j;\theta).  
$$            

Therefore, the last inequality in \eqref{eq:proof:elimination_1} holds, and hence for any $\theta'$, $L(\theta) \le L(\theta')$, which proves statement (i).     
\end{proof}

\section{Failure mode of eliminating suboptimal local minima}
\vspace{-7pt}

Our theoretical results in the previous sections have shown that,  for a wide range of deep learning tasks, all suboptimal local minima can be removed by  adding one neuron per output unit. This might be surprising given the fact that dealing with  suboptimal local minima in general is known to be challenging in theory.

However, for the worst case scenario, the following theorem illuminates a novel failure mode  for the elimination of  suboptimal local minima. Theorem \ref{thm:limitation} holds true for  the previous results \citep{liang2018adding} (as we discuss further in Section \ref{sec:background}) and hence  provides a novel failure mode for elimination of local minima in general, apart from our Theorems \ref{thm:main} and \ref{thm:realizable_data}. Our result in this paper is the first result that points out this type of  failure mode for elimination of local minima. The proof of Theorem \ref{thm:limitation} is presented in Appendix \ref{sec:app:proof}. 
     
\begin{theorem} \label{thm:limitation}
Let Assumption \ref{assump:loss} hold, or let Assumptions \ref{assump:loss2} and \ref{assump:realizable} hold. Then, for any $\theta$, if $\theta$ is not a global minimum of $L$, there is no local minimum $(a,b,W)\in \RR^{d_y} \times \RR^{d_y} \times \RR^{d_x \times d_y}$ of $\tilde L|_{\theta}(a,b,W):=\tilde L(\theta,a,b,W)$. Furthermore, there exists a tuple $(\ell,f, \{(x_i,y_i)\}_{i=1}^m)$  and a suboptimal stationary point  $\theta$ of $L$ such that $\frac{\partial \tilde L(\theta,a,b,W)}{\partial \theta}=0$ for all $(a,b,W)\in \RR^{d_y} \times \RR^{d_y} \times \RR^{d_x \times d_y}$. 
\end{theorem}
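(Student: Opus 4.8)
For the first statement, the plan is to recognize it as the contrapositive of what the proof of Theorem~\ref{thm:main} already delivers. That proof fixes $\theta$, takes an arbitrary local minimum $(a,b,W)$ of $\tilde L|_\theta$, derives $a=0$ from the first-order conditions $a_k\,\partial_{a_k}\tilde L = \partial_{b_k}\tilde L + 2\lambda a_k^2 = 2\lambda a_k^2 = 0$, and then invokes the PGB necessary condition to conclude that $L(\theta)\le L(\theta')$ for every $\theta'$, i.e.\ that $\theta$ is a global minimum of $L$. Hence, if $\theta$ is not a global minimum of $L$, no local minimum $(a,b,W)$ of $\tilde L|_\theta$ can exist; under Assumptions~\ref{assump:loss2} and~\ref{assump:realizable} one runs the identical argument with the proof of Theorem~\ref{thm:realizable_data} in place of that of Theorem~\ref{thm:main}. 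So this part needs no new work.

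For the second statement, the key observation is that by the chain rule $\partial_\theta \tilde L(\theta,a,b,W)=\frac{1}{m}\sum_{i=1}^m \big(\partial_\theta f(x_i;\theta)\big)\T \nabla\ell_{y_i}(f(x_i;\theta)+g(x_i;a,b,W))$, so it suffices to exhibit a tuple $(\ell,f,\{(x_i,y_i)\}_{i=1}^m)$ and a point $\theta$ for which the parameter-Jacobian $\partial_\theta f(x_i;\theta)$ vanishes at \emph{every} training input $x_i$ while $\theta$ is not a global minimum of $L$; then $\partial_\theta\tilde L(\theta,a,b,W)=0$ for all $(a,b,W)$, and in particular $\partial_\theta L(\theta)=0$, so $\theta$ is a suboptimal stationary point of $L$. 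I would take the simplest such object: a ``dead'' one-hidden-unit ReLU network with no output bias. Concretely, set $d_x=d_y=1$, $m=1$, let $\ell$ be the squared loss, $(x_1,y_1)=(1,1)$, and $f(x;\theta)=\theta_3\,\mathrm{ReLU}(\theta_1 x+\theta_2)$ with $\theta=(-1,0,1)$. Since $\theta_1 x_1+\theta_2=-1<0$, the hidden unit is inactive at $x_1$ and $f(x_1;\cdot)\equiv 0$ on a full neighborhood of $\theta$, so $\partial_\theta f(x_1;\theta)=0$; but $L(\theta)=(0-1)^2=1>0=L(1,0,1)$, so $\theta$ is suboptimal. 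The squared loss is differentiable and convex, so Assumption~\ref{assump:loss} holds, placing the example inside the theorem's hypotheses.

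The first statement is essentially free given Theorem~\ref{thm:main}, so the only real effort is in the second. The one genuine design choice there is selecting a network that is simultaneously ``dead'' (zero parameter-Jacobian) at \emph{all} training inputs yet non-constant in $\theta$ (so that $\theta$ can be suboptimal) --- which is why I drop the output bias: with an output bias present, that coordinate of $\partial_\theta f$ would equal $1$ and the gradient could not vanish for all $(a,b,W)$. After that, the verification (that $f$ is smooth near $\theta$ with zero derivative there, that $\theta$ is not a global minimizer of $L$, and that $\partial_\theta\tilde L$ is identically $0$ over the whole $(a,b,W)$-fiber above $\theta$) is routine. This example also makes the advertised failure mode transparent: even though $\tilde L|_\theta$ has no local minimum, any gradient-based trajectory on $\tilde L$ started at such a $\theta$ never moves $\theta$, since the entire $(a,b,W)$-fiber over $\theta$ lies in the region where $\partial_\theta\tilde L=0$.
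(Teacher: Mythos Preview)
Your proof is correct and follows essentially the same route as the paper: the first statement is the contrapositive of what the proofs of Theorems~\ref{thm:main} and~\ref{thm:realizable_data} already establish (with $\theta$ held arbitrary), and the second statement is obtained by exhibiting a suboptimal $\theta$ at which the parameter-Jacobian $\partial_\theta f(x_i;\theta)$ vanishes for every training input, which the paper frames as the condition $A[\theta]=0$. The only difference is the concrete witness---the paper points to Example~\ref{example:plot:square} (a sum of Gaussian bumps in $\theta$) while you use a dead ReLU neuron---but the underlying mechanism is identical.
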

Therefore, on the one hand, Theorems \ref{thm:main} and \ref{thm:realizable_data} state that if an algorithm   can find a local minimum of $\tilde L$, then it can find a global minimum of $L$ (via a local minimum of $\tilde L$). On the other hand, Theorem \ref{thm:limitation}  suggests that if an algorithm  moves toward a local minimum of $\tilde L$  by simply following (negative) gradient directions, then either it moves toward a global minimum $\theta$ of $L$  or the norm of $(a,b,W)$ approaches infinity. By monitoring the norm of $(a,b,W)$, we can detect this failure mode. This suggests a hybrid approach of local and global optimization algorithms beyond a pure local gradient-based method,  with a mechanism to monitor the increase in the norm of $(a,b,W)$.

\section{Examples} \label{sec:app:experiment}
\vspace{-7pt}

The previous sections show that one can eliminate suboptimal local minima, and there remains a detectable  failure mode for gradient-based optimization methods after elimination. In this section, we provide numerical and analytical examples to illustrate these phenomena. Through these examples, we show that using $\tilde L$ instead of $L$ can help training deep neural networks in `good' cases while it may not help in `bad' cases.

\vspace{-5pt}
\subsection{Numerical examples}
\vspace{-5pt}

\begin{figure*}[!]
\center
\begin{subfigure}[b]{0.3\textwidth}
  \includegraphics[width=\textwidth]{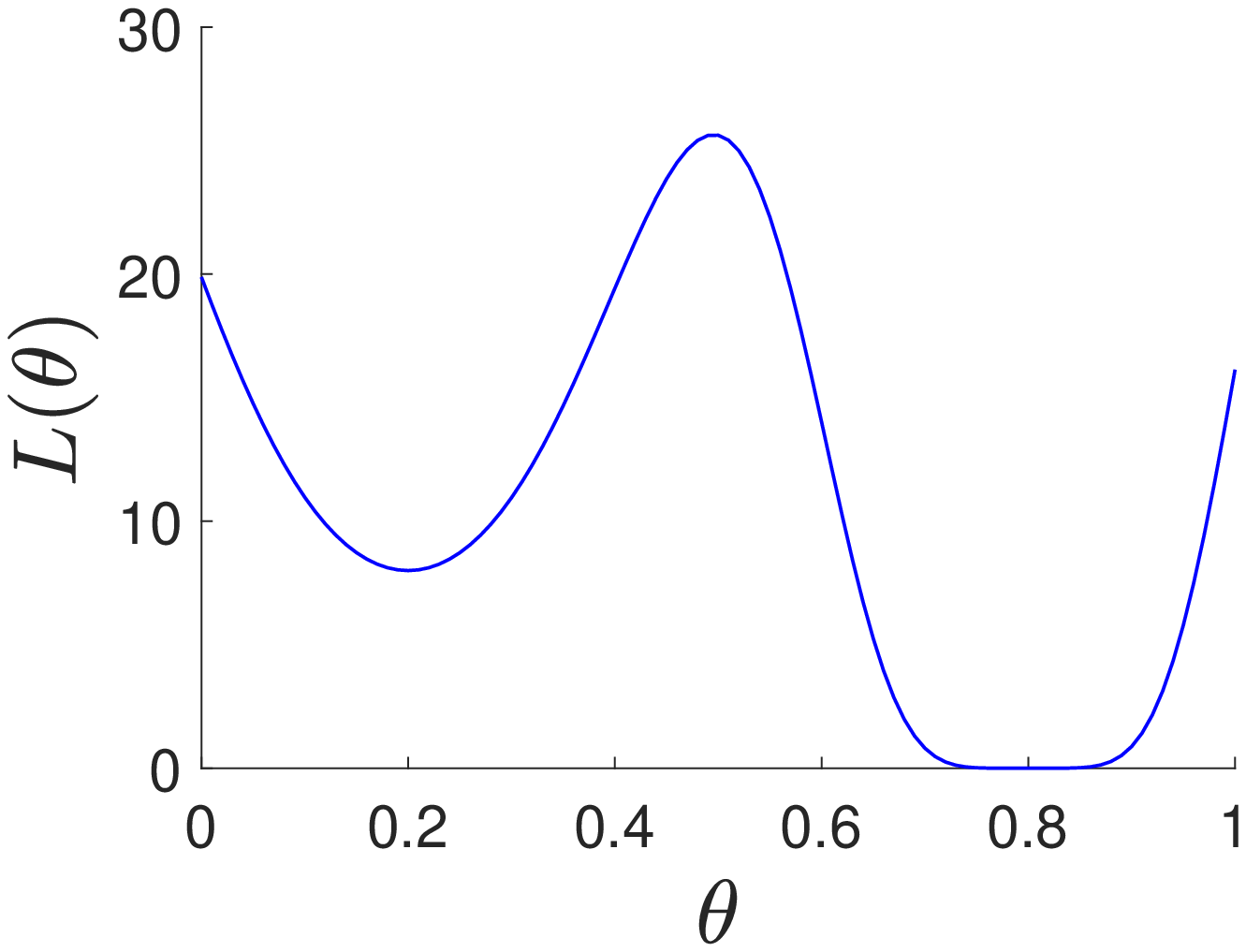} 
  \caption{objective function $L$} 
\end{subfigure}
\begin{subfigure}[b]{0.33\textwidth}
  \includegraphics[width=\textwidth]{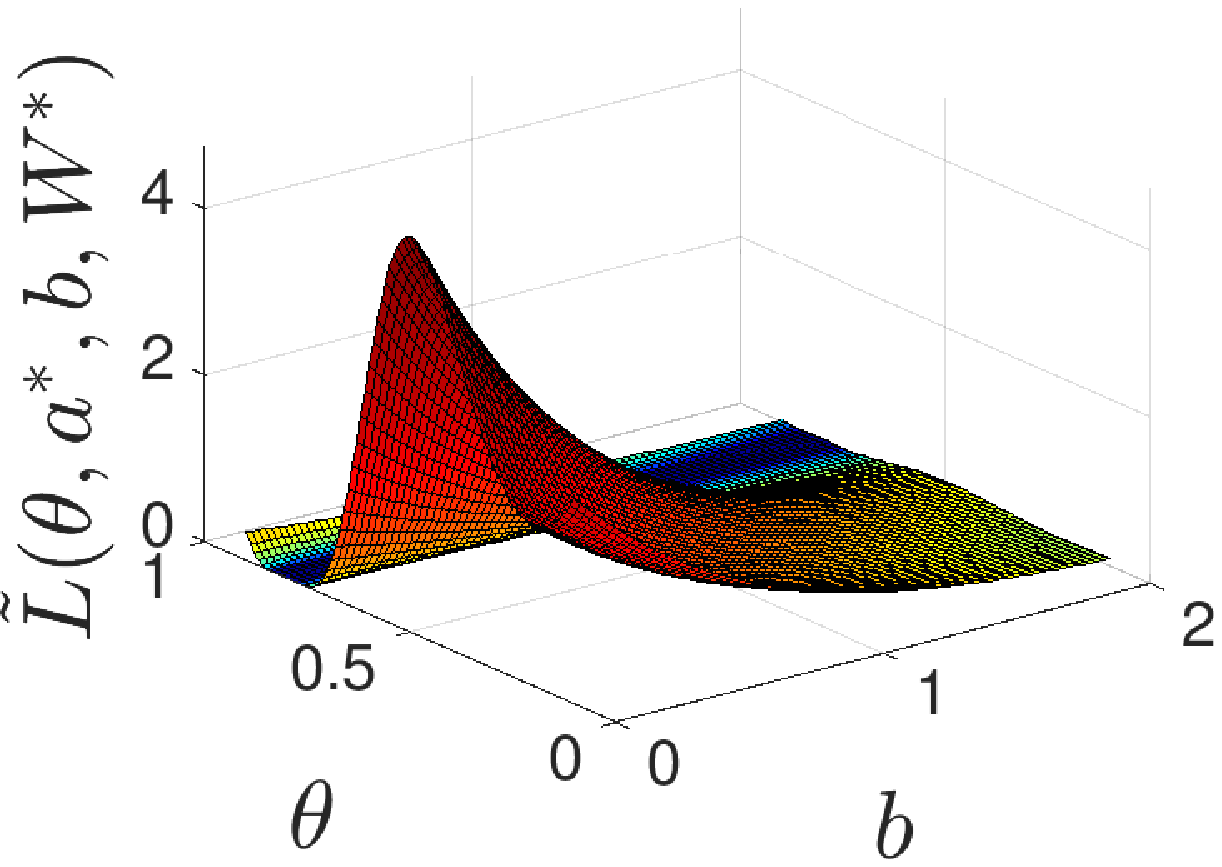}
  \caption{modified function $\tilde L$}
\end{subfigure} 
\begin{subfigure}[b]{0.35\textwidth}
  \includegraphics[width=\textwidth]{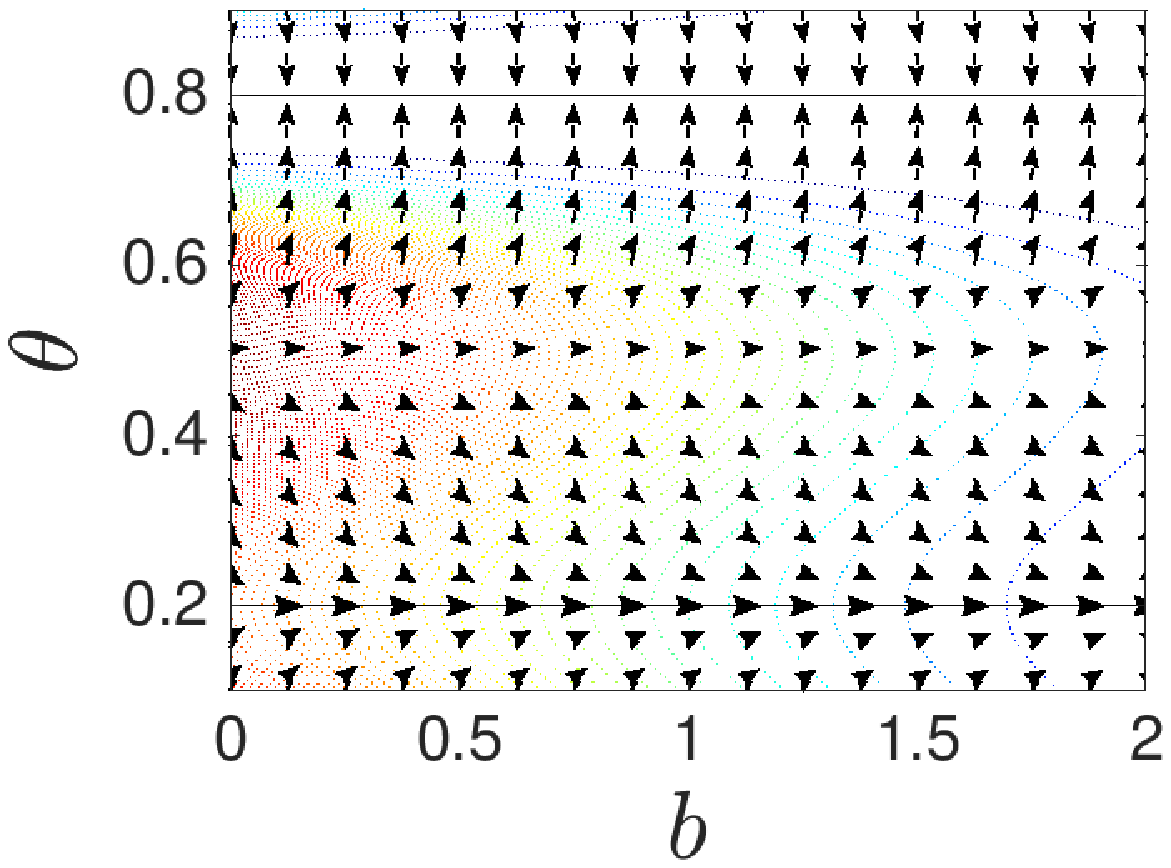}
  \caption{negative gradient directions of $\tilde L$}
\end{subfigure} 
\caption{Illustration of the failure mode suggested by Theorem  \ref{thm:limitation}. In sub-figure (a), the original objective function $L$ has a suboptimal local minimum near $\theta=0.2$ and global minimum near $\theta = 0.8$. In sub-figures (b) and (c), it can be observed that even with the modified objective function $\tilde L$, if $\theta$ is initially near the suboptimal local minimum $(0.2)$, a pure gradient-based local optimization algorithm can still converge to the suboptimal local minimum as $\theta \rightarrow 0.2$ and can diverge in $b$ as $b \rightarrow \infty$. In sub-figure (c), the arrows represent the negative normalized gradient vectors at each point. In sub-figures (b) and (c), the  function $\tilde L$ is plotted along the coordinates $(\theta,b)$ by setting other parameters to be  solutions $(a^*, W^{*})$ of each objective, $\mini_{a,W} \tilde L|_{\theta,b}(a,W)=\tilde L(\theta,a,b,W)$, at each given point $(\theta,b)$. } \vspace{-3pt}
\label{fig:limitation}
\end{figure*}

Figures \ref{fig:limitation} illustrates the novel failure mode proven by Theorem  \ref{thm:limitation}. Note that there exist local minima of $\tilde L$ near $\theta=0.8$ in bounded (and open) subspaces that  are also global minima. Although  there exist local minima of $\tilde L$ in bounded subspaces, there is no local minimum of $L|_{\theta}(a,b,W)$ with a fixed suboptimal $\theta$. The setting used for plotting Figure \ref{fig:limitation} is summarized in Example \ref{example:plot:square}, where the dataset consists of only one sample $(x_1,y_1)$.

\begin{example} \label{example:plot:square}
Let $m=1$, $d_y=1$,  $x_1=0$, and $y_1=-1$. In addition, let $L(\theta)=\ell(f(x_{1};\theta),y_{1})=(\max(0,1-y_{1}f(x_{1};\theta))^3 $. Let $f(x_{1};\theta)=5(- 0.3 e^{-16*(\theta-0.2)^2} - 0.7  e^{-32*(\theta  -0.8)^2} + 0.5)$ for a simple illustration. Because $x_1=0$, we can think of this function as a model with an extra parameter $\theta'$, the effect of which disappears as $\theta' x_1=0$ (e.g., $f(x_{1};\theta)=\bar f(x_{1};\theta,\theta')=5(- 0.3 e^{-16*(\theta'x +\theta-0.2)^2} - 0.7  e^{-32*(\theta'x+\theta  -0.8)^2} + 0.5)$).   
\end{example}

A classical proof using the Weierstrass theorem to guarantee the existence of the optimal solutions in a (nonempty) subspace $S \subseteq\mathbb{R} ^d$ requires a lower semi-continuity of the objective function $\tilde L$ and the existence of a $q \in S$ for which the set $\{q' \in S : \tilde L(q') \le \tilde L(q) \}$ is compact (e.g., see \citealt{bertsekas1999nonlinear} for more discussion on the existence of optimal solutions). In the above example, for the function  $\tilde L|_{\theta}(a,b,W)$ with a fixed suboptimal $\theta$,  the former condition of lower semi-continuity is satisfied, whereas the latter condition of compactness is not.

While Figures \ref{fig:limitation} provides one of the `bad-case' examples, Appendix \ref{sec:numerical_best} and the following section provide some of the `good-case' examples where using $\tilde L$ instead of $L$ helps  optimization of $L$.

\subsection{Analytical examples} \label{sec:analytical_example}

To further understand the properties of eliminating suboptimal local minima in an analytical manner, 
this section presents several analytical  examples. Example \ref{example:0} shows a general case where using $\tilde L$ instead of $L$ helps a gradient-based  optimization method. For the simple illustration of the failure mode, Example \ref{example:1} uses a single data point and squared loss. Example \ref{example:3} is the version of Examples \ref{example:1} with two data points and shows that the value of $\tilde L$ can also approach a suboptimal value. Finally, Example \ref{example:3-2} illustrates the existence of a local minimum of $\tilde L$ via only the existence of a standard global minimum $\theta$ of $L$. In Appendix \ref{sec:app:illustration}, Examples \ref{example:2} and  \ref{example:5} show the same phenomena as those in Examples \ref{example:1} and  \ref{example:3} with a smoothed hinge loss.  

\begin{example} \label{example:0}
Let $\small A[\theta]=\frac{1}{m}[(\frac{\partial f(x_{1};\theta)}{\partial\theta})\T,  \allowbreak \dots,  (\frac{\partial f(x_m;\theta)}{\partial\theta}\allowbreak )\T]\in \RR^{d_\theta \times (md_y)}$ be a matrix, and $r[\varphi]=[\nabla \ell_{y_{1}}(\varphi(x_1))\T \allowbreak , \dots, \nabla \ell_{y_{m}}(\varphi(x_{m}))\T]\T \in\RR^{md_y}$ be a column vector given a function $\varphi: \RR^{d_x} \rightarrow \RR^{d_y}$. The modified objective $\tilde L$ helps a gradient-based method by creating extra decreasing directions as  $r[f(\cdot;\theta)+g(\cdot;a,b,W)] \notin \Null(A[\theta])$  even when  $r[f(\cdot;\theta)] \in \Null(A[\theta])$. This also helps optimization when $r[f(\cdot;\theta)]$ is approximately in $\Null(A[\theta])$ while $r[f(\cdot;\theta)+g(\cdot;a,b,W)]$ is not.  
\end{example}
\begin{example} \label{example:1}
Let $m=1$ and $d_y=1$. In addition, let $L(\theta)=\ell(f(x_{1};\theta),y_{1})=(f(x_{1};\theta)-y_{1})^2$. Accordingly, $\tilde L(\theta,a,b,W)=(f(x_{1};\theta)+a\exp(w\T x_1+b) - y_{1})^2 +\lambda a^2$. Let $\theta$ be a non-global minimum of $L$ as $f(x_{1};\theta) \neq y_{1}$. In particular, let us first consider the case of $f(x_{1};\theta)=2$ and $y_1=1$. Then, $L(\theta)=1$ and 
$
\tilde L(\theta,a,b,W)
=1+2a\exp(w\T x_1+b)+a^{2}\exp(2w\T x_1+2b)+ \lambda a^2.
$
If $(a,b,W)$ is a local minimum, from the stationary point conditions of $\frac{\tilde L(\theta,a,b,W)}{\partial a}=0$ and $\frac{\tilde L(\theta,a,b,W)}{\partial b}=0$, we must have $a=0$, yielding that 
$
\tilde L(\theta,a,b,W)=1.
$
However, a point with $a=0$ is  not a local minimum (with finite $(b,w)$), because with $a <0$ and $|a|>0$ being sufficiently small, 
$ 
\tilde L(\theta,a,b,W)
 =1-2 |a| \cdot \exp(w\T x_1+b) + |a|^2 (\exp(2w\T x_1+2b)+\lambda)
  <1.
$
Hence, there is no local minimum $(a,b,W)\in \RR^{} \times \RR^{} \times \RR^{d_x }$ of $\tilde L|_{\theta}$. Indeed, if we set $a= -\exp(-1/\epsilon)$ and $b=1/\epsilon- w\T x_1$,
$
\tilde L(\theta,a,b,W)=\lambda\exp(-2/\epsilon)\rightarrow0 
$
as $\epsilon \rightarrow 0$, and hence as $a \rightarrow 0^-$ and $b \rightarrow \infty$. This illustrates the case in which $(a,b)$ does not attain a solution in   $\RR \times \RR$. The identical conclusion holds with the general case of $f(x_{1};\theta) \neq y_{1}$ by following  the same steps of reasoning.  
\end{example}

\begin{example} \label{example:3}
Let $m=2$ and $d_y=1$. In addition, $L(\theta)=(f(x_{1};\theta)-y_{1})^2+(f(x_{2};\theta)-y_{2})^{2}$. Let us consider the case of $f(x_{1};\theta)=f(x_{2};\theta)=0$, $y_1=1$, and $y_2=-1$. Then, $L(\theta)=2$. If $(a,b,W)$ is a local minimum, we must have $a=0$ similarly to Example \ref{example:1}, yielding that $\tilde L(\theta,a,b,W)=2$. On the other hand,

\begin{align} \label{eq:example:3}
& \tilde L(\theta,a,b,W) 
\\ \nonumber  & = 2-2a(\exp(w\T x_1+b)-\exp(w\T x_2+b)) +  \varphi(a^{2}),
\end{align} 
where $ \varphi(a^{2})=a^{2}\exp(2w\T x_1+2b)+a^{2}\exp(2w\T x_2+2b)+\lambda a^{2}$. Note that, with a sufficiently small $|a|>0$, the term $\varphi(a^{2})$ becomes negligible. Let $x_1 \neq x_2$. In this case, our $\theta$ with $f(x_{1};\theta)=f(x_{2};\theta)=0$ is not a global minimum. Then, a point with $a=0$ can be shown to be not a local minimum as follows. If $\exp(w\T x_1+b) >\exp(w\T x_2+b) $, with $a>0$ being sufficiently small, $\tilde L(\theta,a,b,W)<2$. If $\exp(w\T x_1+b) <\exp(w\T x_2+b) $, with $a<0$ and $|a|$ being sufficiently small, $\tilde L(\theta,a,b,W)<2$. If $\exp(w\T x_1+b)=\exp(w\T x_2+b) $, since $x_1 \neq x_2$, we can perturb $w$ with an arbitrarily small magnitude to make $\exp(w\T x_1+b) \neq \exp(w\T x_2+b)$, and hence we can yield the above cases. Thus, a point with $a=0$ is not a local minimum. Therefore, there is no local minimum $(a,b,W)$ of $\tilde L|_{\theta}$. Indeed, since $x_1 \neq x_2$, if we set $a= \exp(-1/\epsilon)$, $b=1/\epsilon-w\T x_1$, and $w=-\frac{1}{\epsilon} (x_2-x_1 )$,
$
\tilde L(\theta,a,b,W) 
=(\exp(-\|x_2-x_1\|^2_2 /\epsilon ))+1)^{2}+\lambda\exp(-2/\epsilon)\rightarrow1, $
as $\epsilon \rightarrow 0$, and hence as $a \rightarrow 0^-$, $b \rightarrow \infty$ and $\|w\|\rightarrow \infty $, illustrating the case in which   $(a,b,W)$ does not attain a solution in   $\RR \times \RR \times \RR^{d_x}$.
\end{example}

\begin{example} \label{example:3-2}     
Consider the exact same example as in Example \ref{example:3}, with the exception that $x_1 = x_2$.\ In this case, a $\theta$ with $f(x_{1};\theta)=f(x_{2};\theta)=0$ is a global minimum unlike in Example \ref{example:3}. A point with $a=0$ is indeed a local minimum of $\tilde L$, which can be seen in Equation \eqref{eq:example:3} where for all $a$, $2-2a(\exp(w\T x_1+b)-\exp(w\T x_1+b)) +  \varphi(a^{2})=2+  \varphi(a^{2})\ge 2$.   
\end{example}

\vspace{-8pt}  
\section{Related work} \label{sec:background}
\vspace{-8pt}

There have been many analyses regarding the optimization  of  deep neural networks with    significant over-parameterization \citep{nguyen2017loss,allen2018convergence,du2018gradient,zou2018stochastic} and model simplification \citep{choromanska2015loss,kawaguchi2016deep,hardt2016identity,bartlett2019gradient,du2019width}. In contrast, this paper studies the problem  in the regime without   significant over-parameterization or model simplification.

Our results  are also  different   when compared to the previous study by \citet{liang2018adding}. First, we have solved the  open problem left in the previous study; i.e., our theoretical results are applicable to    practical settings and neural networks with multiple output units and common loss functions. The results in the previous study are  applicable to a neural network with a single output unit for binary classification with particular smoothed hinge loss functions that are not used in common practice. In particular, the previous results are not applicable to multi-class classification or  regression with any loss criteria, or  binary classification with standard loss criteria (e.g., cross entropy loss and   smoothed hinge loss without  twice differentiability). 

Second, we proved and demonstrated the  failure mode of eliminating local minima  as a   key contribution via Theorem \ref{thm:limitation} as well as  numerical and analytical examples. The  failure mode  proven in Theorem \ref{thm:limitation} also holds true for  the results in the previous study.\footnote{ This is because the assumptions of Theorem \ref{thm:limitation} are implied by the assumptions used in the previous study, and the construction of the tuple $(\ell,f, \{(x_i,y_i)\}_{i=1}^m, \theta)$ in the proof also accommodates the  setting in  the previous study.} Indeed, Examples \ref{example:plot:square}, \ref{example:2}, and \ref{example:5} as well as  Figure \ref{fig:limitation} directly  illustrate the failure mode of the  results in the previous paper. The previous study does not discuss any possible failure mode of eliminating local minima and, in fact, states that good neural networks are ``just one neuron away'' from bad neural networks (with suboptimal local optima).  Our Theorem \ref{thm:limitation} together with analytical examples prove that such ``good'' neural networks with an added neuron are still subject to their own failure mode, opening up the need of future research.

Third, this paper has introduced a novel and concise proof based on   the PGB necessary condition as well as a longer but more elementary proof. Our  proof  introduced  new insight into why we can eliminate suboptimal local minima; i.e., the global minimum value of the perturbable gradient basis  of an added network is indeed the global minimum value of $L(\theta )$ (see Section \ref{sec:proof_new}  for more details). Beyond the elimination of local minima and the particular modification $\tilde L$, the PGB condition can be used to   analyze other models and modifications, and  it might be helpful to design  new modifications of the objective functions. 

In addition to  the use of the PGB necessary condition,  our proofs also  differ from the previous proofs because the scope and the assumptions of the results are   different. Indeed, the analyses of  one dimensional output with  $y \in \{-1, +1\}$ (the previous study) and  high-dimensional output with  $y \in \RR^{d_y}$ (this paper) are  naturally different. For example, when the matrix $f(X; \theta)=[f(x_1; \theta), ..., f(x_m;\theta)]$ is rank-deficient, all outputs must be simply zero as $f(X; \theta)=0$ in the previous study, whereas $f(X; \theta)$ can be any one of   infinitely  many non-zero (rank-deficient) matrices  in this paper. Unlike the previous study, our proofs also cannot invoke  properties of discrete points and second-order Taylor expansions  because we do not assume  $y \in \{-1, +1\}$ (together with  the particular smoothed hinge loss)  and twice differentiability.

\vspace{-8pt}  
\section{Conclusion}
\vspace{-8pt}

In this paper, we proved that if an algorithm   finds a local minimum of a modified objective function $\tilde L$, then it immediately recovers a global minimum of the original objective function $L$ of an arbitrary deep neural network. However, Theorem \ref{thm:limitation} together with analytical examples  showed that if an algorithm  simply follows negative gradient directions toward a local minimum of $\tilde L$, either it moves toward a global minimum $\theta$ of $L$  or the norm of $(a,b,W)$ approaches infinity. This suggested a hybrid approach of local and global optimization algorithms,  with a mechanism to monitor the norm of $(a,b,W)$. 

From a theoretical viewpoint, we have shown a  reduction of  the problem of getting stuck around an arbitrarily poor local minimum into the  detectable problem of the divergence of the norm. This proven reduction might be useful as a future proof technique in a theoretical literature and as a foundation of a future algorithm in practice. 

In summary, this paper has advanced theoretical understanding of the  properties of the optimization landscape in the  regime that has  not been studied well by previous research with significant over-parameterization or model simplification.
Beyond the elimination of local minima, this paper has introduced the proof technique based the PGB necessary condition of local minima that can be used to study   general  machine learning models and transformations of objective functions.

\vspace{-4pt}
\subsubsection*{Acknowledgements}
\vspace{-5pt}
{ 
We gratefully acknowledge support from NSF grants 1523767 and 1723381, AFOSR grant FA9550-17-1-0165, ONR grant N00014-18-1-2847, Honda Research, and the MIT-Sensetime Alliance on AI. 
}

{
\bibliography{all}
}

\clearpage
        
\appendix

\onecolumn

\begin{center}
\textbf{\LARGE
A Necessary Condition and Elimination of Local Minima for Deep Neural Networks \vspace{10pt}\\ Appendix \vspace{8pt}
}
\end{center}


\section{Proofs of Theorem \ref{thm:pgb}  }

Let $z\in \RR^{d_z} $ be an arbitrary local minimum of $Q$. From the convexity and differentiability of   $Q_{i}$ and $R_{i}$, we have that
\begin{align} \label{eq:proof:pgb_1}
 & \frac{1}{m}\sum_{i=1}^m  Q_{i}(\phi_{i}^z(\alpha, \epsilon,S))+R_{i}(\varphi_i^{z}(\alpha, \epsilon,S))
\\ \nonumber & \ge\frac{1}{m} \sum_{i=1}^m Q_{i}(\phi_i(z))+R_{i}(\varphi_i(z))+  \partial Q_{i}(\phi_i(z))(\phi_{i}^z(\alpha, \epsilon,S)-\phi_i(z))+  \partial R_{i}(\varphi_i(z))(\varphi_{i}^z(\alpha, \epsilon,S)-\varphi_i(z))
\\ \nonumber & =Q(z)+\frac{1}{m}  \sum_{i=1}^m \partial Q_{i}(\phi_i(z))\phi_{i}^z(\alpha, \epsilon,S)+\partial R_{i}(\varphi_i(z))\varphi_{i}^z(\alpha, \epsilon,S)-  \partial Q_{i}(\phi_i(z))\phi_i(z)-  \partial R_{i}(\varphi_i(z))\varphi_i(z). \end{align}
Since  $z$ is  a local minimum of $Q$, by  the definition of a local minimum, there exists $\epsilon_1>0$ such that $Q(z) \le Q(z')$ for all $z'\in B(z,\epsilon_1)$. Then, for  any $\epsilon \in [0, \epsilon_{1}/2)$ and any $\nu\in\Vcal[z,\epsilon]$, the vector $(z+\epsilon v)$ is also a local minimum because   
$$
 Q(z+\epsilon v)=Q(z)\le Q(z'),
$$
for all $z' \in B(z+\epsilon v,\epsilon_{1}/2) \subseteq  B(z,\epsilon_1)$, where the set inclusion follows from the triangle inequality. This satisfies the definition of a local minimum for $(z+\epsilon v)$. 
Since the composition and the sums of differentiable functions are differentiable,  the vector $(z+\epsilon v)$ is  a differentiable local minimum. Therefore, from the first-order necessary condition of differentiable local minima, there exists $\epsilon_{0} > 0$ such that for any $\epsilon \in [0, \epsilon_{0})$, any $v \in\Vcal[\theta,\epsilon]$, and any $k \in \{1,\dots,d_\theta\}$, 
\begin{align} \label{eq:proof:pgb_2}
\partial_{k} Q(z+\epsilon v) =\frac{1}{m}\sum_{i=1}^m     \partial Q_{i}(\phi_i(\theta))\partial_{k} \phi_i(z+\epsilon v)+\partial R_{i}(\varphi_i(\theta))\partial_{k} \varphi_i(z+\epsilon v)=0,
\end{align}
where we used the fact that $\phi_i(z)=\phi_i(z+\epsilon v)$ and $\varphi_i(z)=\varphi_i(z+\epsilon v)$ for any $v \in \Vcal[z,\epsilon]$. From  \eqref{eq:proof:pgb_2}, there exists $\epsilon_{0} > 0$ such that for any $\epsilon \in [0, \epsilon_{0})$,  any $   S \finsubseteq \Vcal[\theta,\epsilon]$ and any $\alpha \in \RR^{d_\theta\times |S|}$,
\begin{align} \label{eq:proof:pgb_3}
&\frac{1}{m}  \sum_{i=1}^m \partial Q_{i}(\phi_i(z))\phi_{i}^z(\alpha, \epsilon,S)+\partial R_{i}(\varphi_i(z))\varphi_{i}^z(\alpha, \epsilon,S)
\\ \nonumber &=\sum_{k=1}^{d_z} \sum_{j=1}^{|S|} \alpha_{k,j} \left(\frac{1}{m}  \sum_{i=1}^m \partial Q_{i}(\phi_i(z))\partial _{k}\phi_{i}(z+\epsilon S_{j})+\partial R_{i}(\varphi_i(z))\partial _{k}\varphi_i(z+\epsilon S_{j}) \right) 
\\ \nonumber & = 0
\end{align}

where the second line follows the definition of $\phi_{i}^z(\alpha, \epsilon,S)$ and $\varphi_{i}^z(\alpha, \epsilon,S)$, and  the last line follows \eqref{eq:proof:pgb_2}. 

Furthermore, \begin{align} \label{eq:proof:pgb_4}
&\frac{1}{m}  \sum_{i=1}^m \partial Q_{i}(\phi_i(z))\phi_i(z)+  \partial R_{i}(\varphi_i(z))\varphi_i(z)\pm  (1/\rho) \partial R_{i}(\varphi_i(z))\varphi_i(z)
\\ \nonumber &=\sum_{k=1}^{d_z}h^{}(z)_k \left(\frac{1}{m}  \sum_{i=1}^m \partial Q_{i}(\phi_i(z))\partial _{k}\phi_{i}(z)+\partial R_{i}(\varphi_i(z))\partial _{k}\varphi_i(z) \right) +(1-1/\rho)\frac{1}{m}  \sum_{i=1}^m \partial R_{i}(\varphi_i(z))\varphi_i(z)
\\ \nonumber & = (1-1/\rho)\frac{1}{m}  \sum_{i=1}^m \partial R_{i}(\varphi_i(z))\varphi_i(z)
\end{align}

where the second line follows the assumption of the existence of a function $h$ for writing $\phi_i(z)$ and $\varphi_i(z)$, and  the last line follows \eqref{eq:proof:pgb_2}. 

Substituting \eqref{eq:proof:pgb_3} and \eqref{eq:proof:pgb_4} into \eqref{eq:proof:pgb_1},  there exists $\epsilon_{0} > 0$ such that for any $\epsilon \in [0, \epsilon_{0})$,  any $   S \finsubseteq \Vcal[\theta,\epsilon]$ and
any $\alpha \in \RR^{d_\theta\times |S|}$,
\begin{align*}
 & \frac{1}{m}\sum_{i=1}^m  Q_{i}(\phi_{i}^z(\alpha, \epsilon,S))+R_{i}(\varphi_i^{z}(\alpha, \epsilon,S)) \ge Q(z)-(1-1/\rho)\frac{1}{m}  \sum_{i=1}^m \partial R_{i}(\varphi_i(z))\varphi_i(z).
\end{align*}
 This proves the main statement of the theorem. In the case of $\rho=1$,  this shows that on the one hand, there exists $\epsilon_{0} > 0$ such that for any $\epsilon \in [0, \epsilon_{0})$, $Q(z)\le \inf \{\frac{1}{m}\sum_{i=1}^m  Q_{i}(\phi_{i}^z(\alpha, \epsilon,S))+R_{i}(\varphi_i^{z}(\alpha, \epsilon,S)) :S \finsubseteq \Vcal[z,\epsilon] ,\alpha \in \RR^{d_z \times |S|}\}$.
On the other hand, since $\phi_{i}(z)=\sum_{k=1}^{d_z} h^{}(z)_k \partial_{k} \phi_{i} (z)$ and $\varphi_{i}(z)=\rho\sum_{k=1}^{d_z} h^{}(z)_k \partial_{k} \varphi_{i} (z)$ with $\rho=1$,
 we have that $Q(z)\ge \inf \{\frac{1}{m}\sum_{i=1}^m  Q_{i}(\phi_{i}^z(\alpha, \epsilon,S))+R_{i}(\varphi_i^{z}(\alpha, \epsilon,S)) :S \finsubseteq \Vcal[z,\epsilon] ,\alpha \in \RR^{d_z \times |S|}\}$. Combining these  yields the desires statement for  the equality in the case of $\rho=1$.

\qed

\section{Proofs of eliminating local minima} \label{sec:proof}
  A high level idea behind the proofs of Theorems \ref{thm:main} and \ref{thm:realizable_data} in this section (instead of the proof via the PGB necessary condition) follows the idea utilized  by \cite{kawaguchi2016deep} for  deep linear networks.  That is, we  first obtain   possible candidate local minima $\tilde \theta$ via  the first-order necessary condition (i.e., $\{(\theta, a,b,W):a=0\}$), and then consider small perturbations of those candidate local minima. From the definition of local minima, the value at a possible local minimum $\tilde \theta$ must be less than or equal to  the value at any sufficiently small perturbations of the given local minimum $\tilde \theta$. This condition imposes strong constraints on those  candidate local minima, and turns out to be sufficient to prove the desired result with appropriate perturbations and rearrangements, together with the interpolation result with polynomial or simply  based on linear algebra (i.e., we can interpolate $m'$ points via polynomial as the corresponding matrix has rank $m'$). 

In all the proofs of Theorems \ref{thm:main} and \ref{thm:realizable_data} (including the proof with the PGB necessary condition), we let $\theta$ be arbitrary so that we can prove the failure mode  of eliminating the suboptimal local minima in the next section  (Theorem \ref{thm:limitation}) by reusing these proofs. Let $\ell_{y}(q)=\ell(q,y_{})$, and let $\nabla \ell_{y}(\varphi(q))=(\nabla \ell_{y})(\varphi(q))$ be the gradient $\nabla \ell_{y}$ evaluated at an output $\varphi(q)$ of a function $\varphi$.

\subsection{Proof of Theorem \ref{thm:main} without the PGB necessary condition}  

\begin{proof}[Proof of Theorem \ref{thm:main}]
 
Let $\theta$ be fixed. Let $(a,b,W)$ be a local minimum of $\tilde L|_{\theta}(a,b,W):=\tilde L(\theta,a,b,W)$. 
Let $\tilde L|_{(\theta,W)}(a,b)=\tilde L(\theta,a,b,W)$. Since $\ell_{y}:q  \mapsto\ell(q, y)$ is assumed to be differentiable, $\tilde L|_{(\theta,W)}$ is also differentiable (since a sum of differentiable functions is differentiable, and a composition of differentiable functions is differentiable). From the definition of a stationary point of a differentiable function $\tilde L|_{(\theta,W)}$, for all $k \in \{1,2,\dots,d_y\}$, $a_k \frac{\partial \tilde L(\theta,a,b,W)}{\partial a_{k}}= \frac{1}{m}\sum_{i=1}^m (\nabla \ell_{y_i}( f(x_{i};\theta) + g(x_{i};a,b,W)))_{k}a_k\exp(w_k\T x +b_k)+2\lambda a_{k}=\frac{\partial \tilde L(\theta,a,b,W)}{\partial b_{k}}+2\lambda a_{k}^2=2\lambda a_{k}^2=0$, which implies that $a_{k}=0$ for all $k \in \{1,2,\dots,d_y\}$, since $ 2\lambda \neq 0$. Therefore, we have that  
\begin{align} \label{lemma:a=0}
a=0.
\end{align}

This yields $g(x;a,b,W)=0$, and
$$
\tilde L(\theta,a,b,W)=L(\theta).
$$
We now consider perturbations of a local minimum $(a,b,W)$ of $L|_{\theta}$ with $a=0$. Note that, among other equivalent definitions, a function $h:\RR^d \rightarrow \RR$ is said to be differentiable at $q \in \RR^d$ if there exist a vector $\nabla h(q)$ and a function $\varphi (q;\cdot)$ (with its domain being a deleted neighborhood of the origin $0 \in \RR^{d}$) such that  $\lim_{\Delta q \rightarrow0} \varphi_{}(q;\Delta q)=0$, and 
$$
h(q+\Delta q) =h(q)+\nabla h(q)\T\Delta q +\|\Delta q\|\varphi_{}(q;\Delta q),
$$
for any non-zero vector $\Delta q\in \RR^{d}$ that is sufficiently close to $0 \in \RR^{d}$ (e.g., see fundamental increment lemma and the definition of differentiability for multivariable functions). Thus, with sufficiently small perturbations $\Delta a \in \RR^{d_y}$ and $\Delta W =\begin{bmatrix}\Delta w_1 & \Delta w_2 & \dots & \Delta w_{d_{y}}
\end{bmatrix} \in \RR^{d_x \times d_y}$,
there exists a function $\varphi$ such that 

\begin{align*}
& \tilde L(\theta,a+\Delta a ,b,W+ \Delta W)
\\ & =\frac{1}{m} \sum_{i=1}^m \ell_{y_i}( f(x_{i};\theta) + \Delta g_i) +\lambda \|\Delta a\|^2_2
\\ & =\frac{1}{m} \sum_{i=1}^m  \ell_{y_i}( f(x_{i};\theta))+\nabla \ell_{y_i}( f(x_{i};\theta)) \T \Delta g_i
 + \|\Delta g_i\|_{2}\varphi( f(x_{i};\theta);\Delta g_i) + \lambda \|\Delta a\|^2_2,  
\end{align*}
where $\lim_{\Delta q \rightarrow0} \varphi_{}( f(x_{i};\theta);\Delta q)=0$ and $\Delta g_i =g(x_{i};\Delta a,b,W+ \Delta W))$. Here, the last line follows the definition of the differentiability of $\ell_{y_i}$, since $g(x_{i};\Delta a,b,W+ \Delta W)_{k}=\Delta a_{k} \exp(w_k\T x_{i} +\Delta w_k \T x_{i}+b_k)$ is arbitrarily small with sufficiently small $\Delta a_{k}$ and $\Delta w_k$. 

Combining the above two equations, since $(a,b,W)$ is a local minimum, we have that, for any sufficiently small $\Delta a$ and $\Delta w$, 
\begin{align*}
& \tilde L(\theta,a+\Delta a ,b,W+ \Delta W)- \tilde L(\theta,a,b,W) 
\\ & =\frac{1}{m} \sum_{i=1}^m  \nabla \ell_{y_i}( f(x_{i};\theta)) \T \Delta g_i 
 + \frac{1}{m} \sum_{i=1}^m  \|\Delta g_i\|_2\varphi( f(x_{i};\theta);\Delta g_i) + \lambda \|\Delta a\|^2_2
\\ & \ge 0. 
\end{align*}
Rearranging with $\Delta a_{}= \epsilon v$ such that $\epsilon>0$ and $\|v\|_{2} =1$,
and with $\Delta \tilde g_i =g(x_{i}; v,b,W+ \Delta W)$,
\begin{align*}
& \frac{\epsilon}{m} \sum_{i=1}^m  \nabla \ell_{y_i}( f(x_{i};\theta)) \T \Delta \tilde g_i 
\ge - \frac{\epsilon}{m} \sum_{i=1}^m \|\Delta \tilde g_i \|_{2}\varphi( f(x_{i};\theta);\epsilon\Delta \tilde g_i ) - \lambda\epsilon^{2} \|v\|^2_2,  
\end{align*}
since $\Delta g_i=\epsilon\Delta \tilde g_i $.
With $\epsilon>0$, this implies that  
\begin{align*}
& \frac{1}{m} \sum_{i=1}^m  \nabla \ell_{y_i}( f(x_{i};\theta)) \T \Delta \tilde g_i 
\ge -\frac{1}{m} \sum_{i=1}^m \|\Delta \tilde g_i \|_{2}\varphi( f(x_{i};\theta);\epsilon\Delta \tilde g_i ) - \lambda\epsilon^{} \|v\|^2_2.  
\end{align*}
Since $\varphi( f(x_{i};\theta);\epsilon\Delta \tilde g_i)\rightarrow0$ and $\lambda\epsilon^{} \|v\|^2_2\rightarrow 0$ as $\epsilon\rightarrow 0$ ($\epsilon \neq 0$), 
$$
 \sum_{i=1}^m  \nabla \ell_{y_i}(f(x_{i};\theta)) \T g(x_{i}; v,b,W+ \Delta W) \ge 0.
$$
For any $k \in \{1,2,\dots,d_y\}$, by setting $v_{k'}=0$ for all $k' \neq k$,  we have that 
$$
v_{k}\sum_{i=1}^m  (\nabla \ell_{y_i}(f(x_{i};\theta)))_k \exp(w_k\T x_{i} +\Delta w_k \T x_{i}+b_k) \ge 0,
$$
for any $v_k \in \RR$ such that $|v_k|=1$. With $v_k\in \{-\-1,1\},$
$$
 \sum_{i=1}^m  (\nabla \ell_{y_i}(f(x_{i};\theta)))_k \exp(w_k\T x_{i} +b_k)\exp(\Delta w_k \T x_{i}) = 0.
$$
By setting $\Delta w_k = \bar \epsilon_k u_k$ such that $\bar \epsilon_{k}>0$ and $\|u\|_{2} =1$,
\begin{align*} 
&\sum_{t=0}^\infty \frac{\bar \epsilon_k^t }{t!} \sum_{i=1}^m  (\nabla \ell_{y_i}(f(x_{i};\theta)))_k \exp(w_k\T x_{i} +b_k)(u_k\T x_{i})^{t}  
  =0, 
\end{align*}  
since $\exp(q) = \lim_{T \rightarrow \infty} \sum_{t=0}^T \frac{q^t}{t!}$  and a finite sum of limits of convergent sequences is the limit of the finite sum. Rewriting this using $z_t=\sum_{i=1}^m (\nabla \ell_{y_i}(f(x_{i};\theta)))_k \exp(w_k\T x_{i} +b_k)(u_k\T x_{i})^{t}$,
\begin{equation} \label{eq:induction1}
\lim_{T \rightarrow \infty}\sum_{t=0}^T \frac{\bar \epsilon_k^t }{t!} z_t=0.
\end{equation} 
We now show that $z_p=0$ for all $p \in \mathbb{N}_0$ by induction. Consider the base case with $p=0$. Equation \eqref{eq:induction1} implies that 
$$
\lim_{T \rightarrow \infty} \left(z_0 +\sum_{t=1}^T \frac{\bar \epsilon_k^t }{t!} z_t \right) =z_0 + \lim_{T \rightarrow \infty} \sum_{t=1}^T \frac{\bar \epsilon_k^t }{t!} z_t  = 0
$$ 
since $\lim_{T \rightarrow \infty} \sum_{t=1}^T \frac{\bar \epsilon_k^t }{t!} z_t $ exists (which follows that $\lim_{T \rightarrow \infty}\sum_{t=0}^T \frac{\bar \epsilon_k^t}{t!} z_t=0$ exists).
Here, $ \lim_{T \rightarrow \infty} \sum_{t=1}^T \frac{\bar \epsilon_k^t }{t!} z_t \rightarrow0$ as $\bar \epsilon \rightarrow 0$, and hence $z_0=0$. Consider the inductive step with the inductive hypothesis that $z_{t}=0$ for all $t\le p - 1$. Similarly to the base case, Equation \eqref{eq:induction1} implies $$
\sum_{t=0}^{p-1} \frac{\bar \epsilon_k^t }{t!} z_t+\frac{\bar \epsilon_k^p }{p!}z_p +\lim_{T \rightarrow \infty} \sum_{t=p+1}^T \frac{\bar \epsilon_k^t }{t!} z_t=0.
$$
Multiplying $p!/\bar \epsilon_k^p$ on both sides,
since $\sum_{t=0}^{p-1} \frac{\bar \epsilon_k^t }{t!} z_t=0$ from the inductive hypothesis,$$
z_p +\lim_{T \rightarrow \infty} \sum_{t=p+1}^T \frac{\bar \epsilon_k^{t-p} p!}{t!} z_t=0.
$$
Since $\lim_{T \rightarrow \infty}\sum_{t=p+1}^T \frac{\bar \epsilon_k^{t-p} p!}{t!} z_t \rightarrow0$ as $\bar \epsilon \rightarrow 0$, we have that  $z_p=0$, which finishes the induction. Therefore, for any $k \in \{1,2,\dots,d_y\}$ and any $p\in \mathbb{N}_0$,
\begin{align} \label{lemma:main_lemma}
\sum_{i=1}^m  (\nabla \ell_{y_i}(f(x_{i};\theta)))_k \exp(w_k\T x_{i} +b_k)(u_k\T x_{i})^{p}=0.
\end{align}   
Let $x \otimes x$ be the tensor product of the vectors $x$ and $x^{\otimes p}= x \otimes \cdots \otimes x$ where $x$ appears $p$ times. For a $p$-th order tensor  $M \in \RR^{d \times \cdots \times d}$ and $p$ vectors $u^{(1)},u^{(2)},\dots,u^{(p)} \in \RR^d$,  defines
$$
M(u_k^{(1)},u_k^{(2)},\dots,u_k^{(p)})=\sum_{1 \le i_1 \cdots i_p \le d} M_{i_{1}\cdots i_{p}} u^{(1)}_{i_1} \cdots u^{(p)}_{i_p}.
$$

Let $\xi_{i,k}=(\nabla \ell_{y_i}( f(x_{i};\theta)))_{k}\exp(w_k\T x_{i} +b_k)$. Then, for any $k \in \{1,2,\dots,d_y\}$ and any $p \in \mathbb{N}_{0}$, 
\begin{align*}
\max_{\substack{u^{(1)},\dots,u^{(p)}:\\ \|u^{(1)}\|_{2}=\cdots=\|u^{(p)}\|_{2}=1}} \left(\sum_{i=1}^m  \xi_{i,k} x_i^{\otimes p} \right)(u^{(1)},\dots,u^{(p)})
& = \max_{u: \|u\|_2=1} \left(  \sum_{i=1}^m  \xi_{i,k} x_i^{\otimes p} \right)(u,u,\dots,u)
\\ &=\max_{u: \|u\|_2=1} \sum_{i=1}^m  \xi_{i,k}(u\T x_i)^p =0.
\end{align*}
\normalsize

where the first line follows theorem 2.1 in \citep{zhang2012best}, and the last line follows Equation \eqref{lemma:main_lemma}. This implies that
\begin{align} \label{eq:all_poly_zero}
& \sum_{i=1}^m (\nabla \ell_{y_i}( f(x_{i};\theta)))_{k}\exp(w_k\T x_{i} +b_k) \vect(x_i^{\otimes p})
 =0\in \RR^{d_{x}^{p}}.
\end{align}
\normalsize

Using Equation \eqref{eq:all_poly_zero}, we now prove statement (i). For any $\theta'$, there exist $p$ and $u_{t,k}$ (for $t=0,\dots,p$ and $k = 1,\dots,d_y$) such that 
\begin{align*}
m(L(\theta')-L(\theta))
 &\ge  \sum_{i=1}^m  \nabla \ell_{y_i}(f(x_i;\theta))\T (f(x_i;\theta')-f(x_i;\theta))
\\ &= \sum_{j=1}^{m'} \sum_{i\in \mathcal{I}_j}  \nabla \ell_{y_i}(f(x_i;\theta))\T (f(x_i;\theta')-f(x_i;\theta))
\\ & =\sum_{j=1}^{m'}  \sum_{k=1}^{d_y} \hspace{-5pt} \underbrace{(f(\bar x_j;\theta')-f(\bar x_j;\theta))_{k}}_{\substack{ \\ \\ = \exp(w_k\T \bar x_{j} +b_k)\sum_{t=0}^p u\T_{t,k} \vect(\bar  x_j^{\otimes t})}} \hspace{-8pt} \sum_{i\in \mathcal{I}_j}  \nabla \ell_{y_i}(f(x_i;\theta))_k 
\\ &= \sum_{t=0}^p   \sum_{k=1}^{d_y} u\T_{t,k} \underbrace{\sum_{i=1}^{m}    \nabla \ell_{y_i}(f(x_i;\theta))_k \exp(w_k\T x_{i} +b_k)\vect(x_i^{\otimes t})}_{\textstyle \small =0 \text{ from Equation \eqref{eq:all_poly_zero}}}
\\ & = 0, 
\end{align*}  
\normalsize 
where the first line follows from the assumption that $\ell_{y_i}$ is convex and differentiable, and the third line follows from the fact that $\bar x_j=x_{}$ for all $x \in \mathcal{I}_j$. 
The forth line follows from the fact that the vector $\vect( x_i^{\otimes t})$ contains all monomials in $x_i$ of degree $t$, and $m'$ input points  $\bar x_1, \dots, \bar x_{m'}$ are distinct, which allows the basic existence (and construction) result of a polynomial interpolation of the finite $m'$ points; i.e.,  with $p$ sufficiently large ($p=m'-1$ is sufficient), for each $k$, there exists $u_{t,k}$ such that $\sum_{t=0}^p u\T_{t,k} \vect(\bar  x_j^{\otimes t})=q_{j,k}$ for any $q_{j,k} \in \RR$ for all $j \in \{1,\dots, m'\}$ (e.g., see equation (1.9) in \citealt{gasca2000polynomial}), in particular, including $q_{j,k}=(f(\bar x_j;\theta')-f(\bar x_j;\theta))_k \exp(-w_k\T \bar x_{j} -b_k)$.    

Therefore, we have that, for any $\theta'$, $L(\theta') \ge L(\theta)$, which proves statement (i). Statement (ii) directly follows from Equation \eqref{lemma:a=0}.
\end{proof}

\subsection{Proof of Theorem \ref{thm:realizable_data}} 
\begin{proof}[Proof of Theorem \ref{thm:realizable_data}]
 Let $\theta$ be fixed. Let $(a,b,W)$ be a local minimum of $\tilde L|_{\theta}(a,b,W):=\tilde L(\theta,a,b,W)$. Then, for any $k \in \{1,2,\dots,d_y\}$, there exist $p$ and $u_{t,k}$ (for $t=0,\dots,p$) such that 
\begin{align*}
 \sum_{i=1}^m(\nabla \ell_{y_i}( f(x_{i};\theta)))_{k}^2
 & = \sum_{j=1}^{m'} | \mathcal{I}_j| (\nabla \ell_{f^{*}(\bar x_j)}( f(\bar x_{j};\theta)))_{k}^2 
\\ & =\sum_{t=0}^p u\T_{t,k} \sum_{i=1}^{m}   (\nabla \ell_{y_{i}}( f(x_{i};\theta)))_{k}\exp(w_k\T x_{i} +b_k) \vect(x_i^{\otimes t})
\\ & = 0,  
\end{align*}
where the first line utilizes Assumption \ref{assump:realizable}. The second line follows from the fact that since $m'$ input points $\bar x_1, \dots, \bar x_{m'}$ are distinct,
with $p$ sufficiently large ($p=m'-1$ is sufficient), for each $k$, there exist $u_{t,k}$ for $t=0,\dots,p$ such that $\sum_{t=0}^p u\T_{t,k} \vect(  x_i^{\otimes t})=(\nabla \ell_{f^{*}(\bar x_j)}( f(\bar x_{j};\theta)))_{k} \exp(-w_k\T  \bar x_{j} -b_k)| \mathcal{I}_j|^{-1}$ (similarly to the proof of Theorem \ref{thm:main}). The third line follows from Equation \eqref{eq:all_poly_zero}. Here, Equation \eqref{eq:all_poly_zero} still holds since it is obtained in the proof of Theorem \ref{thm:main} under only the assumption that the function $\ell_{y_{i}}:q \mapsto\ell(q, y_{i})$ is differentiable for any $i \in \{1,\dots,m\}$, which is still satisfied by Assumption \ref{assump:loss2}.  

This implies that for all $i \in \{1,\dots,m\}$, $\nabla \ell_{y_i}( f(x_{i};\theta))=0$, which proves statement (iii) because of Assumption \ref{assump:loss2}. Statement (i) directly follows from Statement (iii). Statement (ii) directly follows from Equation \eqref{lemma:a=0}. 
\end{proof}

\section{Proof of Theorem \ref{thm:limitation}} \label{sec:app:proof}
The proofs of Theorems \ref{thm:main} and \ref{thm:realizable_data} (including the proof via the PGB necessary condition) are designed such that the proof of Theorem \ref{thm:limitation} is simple, as shown below. Given a  function $\varphi(q) \in \RR^{d}$ and a vector $v \in \RR^{d'}$, let $\frac{\partial\varphi (q) }{\partial v}$ be a $d \times d'$ matrix with each entry  $(\frac{\partial\varphi (q) }{\partial v})_{i,j}=\frac{\partial(\varphi (q))_{i} }{\partial v_{j}}$. 

\begin{proof}[Proof of Theorem \ref{thm:limitation}]
Let Assumption \ref{assump:loss} hold (instead of Assumptions \ref{assump:loss2} and \ref{assump:realizable}). In the both versions of  our proofs of Theorem \ref{thm:main}, $\theta$ was arbitrary and $(a,b,W)$ was an arbitrary local minimum of $\tilde L|_{\theta}(a,b,W):=\tilde L(\theta,a,b,W)$. Thus,
the same proof proves that, for any $\theta$, at every local minimum $(a,b,W)\in \RR^{d_y} \times \RR^{d_y} \times \RR^{d_x \times d_y}$ of $\tilde L|_{\theta}$, $\theta$ is a global minimum of $L$. Thus, based on the logical equivalence ($p\rightarrow q\equiv \lnot q\rightarrow \lnot p$), if $\theta$ is a not global minimum of $L$, then there is no local minimum $(a,b,W)\in \RR^{d_y} \times \RR^{d_y} \times \RR^{d_x \times d_y}$ of $\tilde L|_{\theta}$, proving the first statement in the case of using Assumption \ref{assump:loss}. Instead of Assumption \ref{assump:loss}, if  Assumptions \ref{assump:loss2} and \ref{assump:realizable} hold, then the exact same proof as above (with Theorem \ref{thm:main} being replaced by Theorem \ref{thm:realizable_data}) proves the first statement.

Example \ref{example:plot:square} with the square loss or the smoothed hinge loss suffices to prove the second statement. However, to obtain better theoretical insight, let us consider a more general construction of the desired tuples $(\ell,f, \{(x_i,y_i)\}_{i=1}^m)$ to prove the second statement. Let $\theta \in \RR^{d_\theta}$. In addition, let $A[\theta]=\frac{1}{m}[(\frac{\partial f(x_{1};\theta)}{\partial\theta})\T \ \cdots \ (\frac{\partial f(x_m;\theta)}{\partial\theta})\T]\in \RR^{d_\theta \times (md_y)}$ be a matrix, and $r[\varphi]=[\nabla \ell_{y_{1}}(\varphi(x_1))\T \ \cdots \ \nabla \ell_{y_{m}}(\varphi(x_{m}))\T]\T \in\RR^{md_y}$ be a column vector given a function $\varphi: \RR^{d_x} \rightarrow \RR^{d_y}$. Then, \begin{align*}
\frac{\partial L(\theta)}{\partial \theta} &=\frac{1}{m} \sum_{i=1}^m \nabla \ell_{y_{i}}(f(x_{i};\theta))\T \frac{\partial f(x_{i};\theta)}{\partial\theta}
= (A[\theta]r[f(\cdot;\theta)])\T,
\end{align*}
and
$$
\frac{\partial \tilde L(\theta,a,b,W)}{\partial \theta} =(A[\theta]r[f(\cdot;\theta)+g(\cdot;a,b,W)])\T.
$$

Here, the equality $A[\theta]r[f(\cdot;\theta)]=0$ is equivalent to  $r[f(\cdot;\theta)] \in \Null(A[\theta])$, where $\Null(A[\theta])$ is the null space of the matrix $A[\theta]$. Therefore, any tuple $(\ell,f, \{(x_i,y_i)\}_{i=1}^m)$ such that $r[f(\cdot;\theta)] \in \Null(A[\theta]) \Rightarrow r[f(\cdot;\theta)+g(\cdot;a,b,W)] \in \Null(A[\theta])$ at a suboptimal $\theta$ suffices to provide a  proof for the second statement. An (infinite) set of tuples $(\ell,f, \{(x_i,y_i)\}_{i=1}^m)$ such that there exists a suboptimal $\theta$ of $L$ with $A[\theta]=0$  (e.g., Example \ref{example:plot:square}) satisfies this condition, which proves the second statement. 
\end{proof}

\section{Additional numerical  examples for good cases} \label{sec:numerical_best}

For using $\tilde L$ instead of $L$, we show the failure mode and `bad-case'  scenarios in Section \ref{sec:app:experiment} and Appendix \ref{sec:app:illustration}. Accordingly, to have a balance, this section considers some of `good-case' scenarios where using $\tilde L$ instead of $L$ helps  optimization of $L$. Figure \ref{fig:exp} shows the histograms of training loss values after training with original networks $f$ minimizing $L$,   and modified  networks $\tilde f$ minimizing $\tilde L$ with and without the failure mode detector based on Theorems \ref{thm:main}, \ref{thm:realizable_data} and \ref{thm:limitation}. We used a simple  failure mode detector, which   automatically restarted the optimizer to a random point during training when $\|a\|_{2}+\|b\|_{2}+\|W\|_{2} \ge 7$. The histograms were plotted with the results of 1000 random trials  for Semeion dataset and of 100 random trials for KMNIST dataset, for each method. Semeion \citep{brescia1994semeion} is a dataset of handwritten digits and KMNIST \citep{clanuwat2018deep} is a dataset of Japanese letters. We used the exact same experimental settings for both the original networks $f$ and the modified networks $\tilde f$ with and without the failure mode detector. We used  a standard variant of LeNet \citep{lecun1998gradient} with ReLU activations: two convolutional layers with 64 $5 \times 5$ filters, followed by a fully-connected layer with 1024 output units and the output layer. The AdaGrad optimizer was employed with the mini-batch size of 64. 

\begin{figure*}[h!]
\center
\begin{subfigure}[b]{0.31\textwidth}
  \includegraphics[width=\textwidth, height=0.7\textwidth]{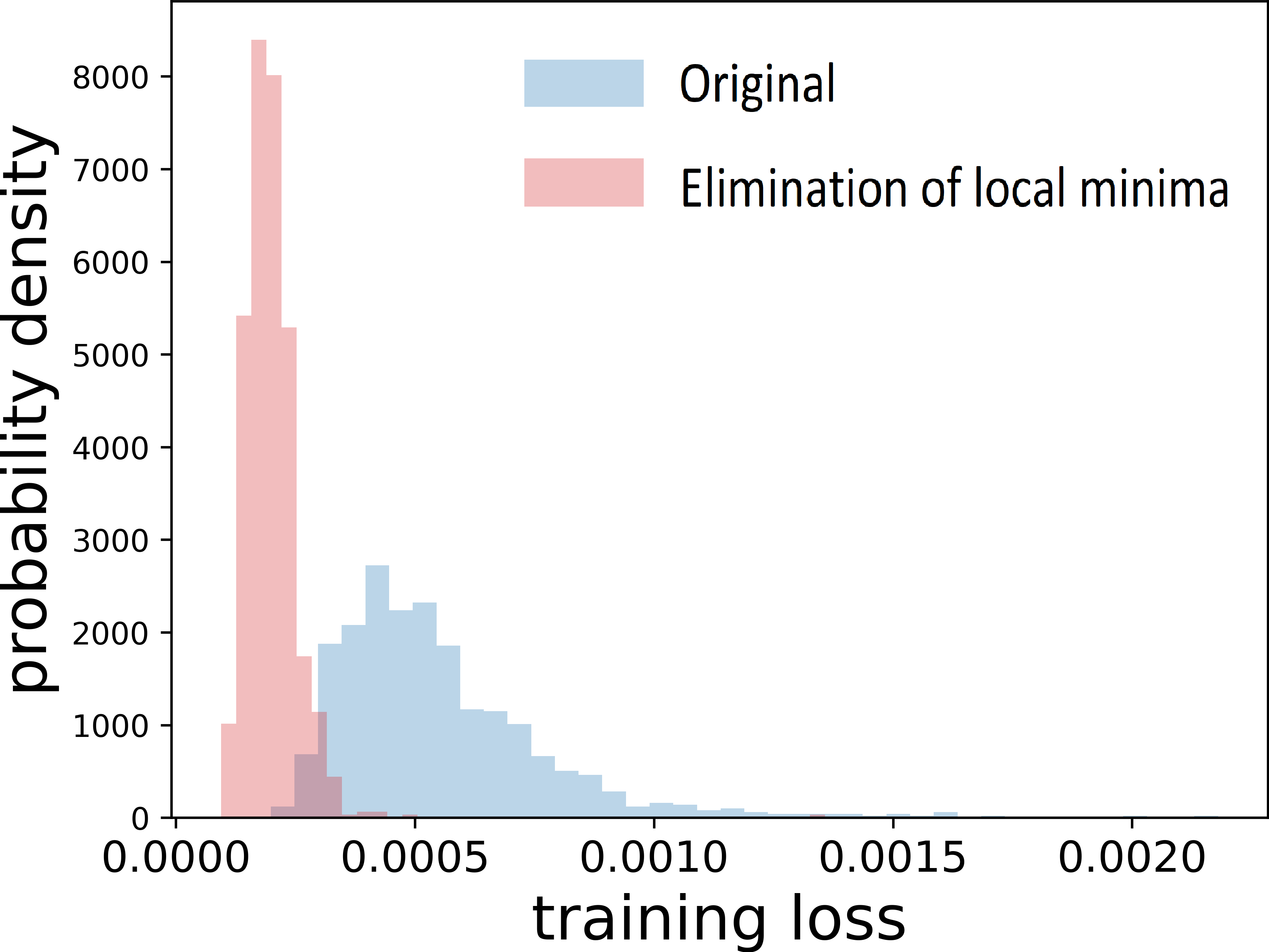}
  \caption{Semeion} 
\end{subfigure} \hspace{30pt}
\begin{subfigure}[b]{0.31\textwidth}
  \includegraphics[width=\textwidth, height=0.7\textwidth]{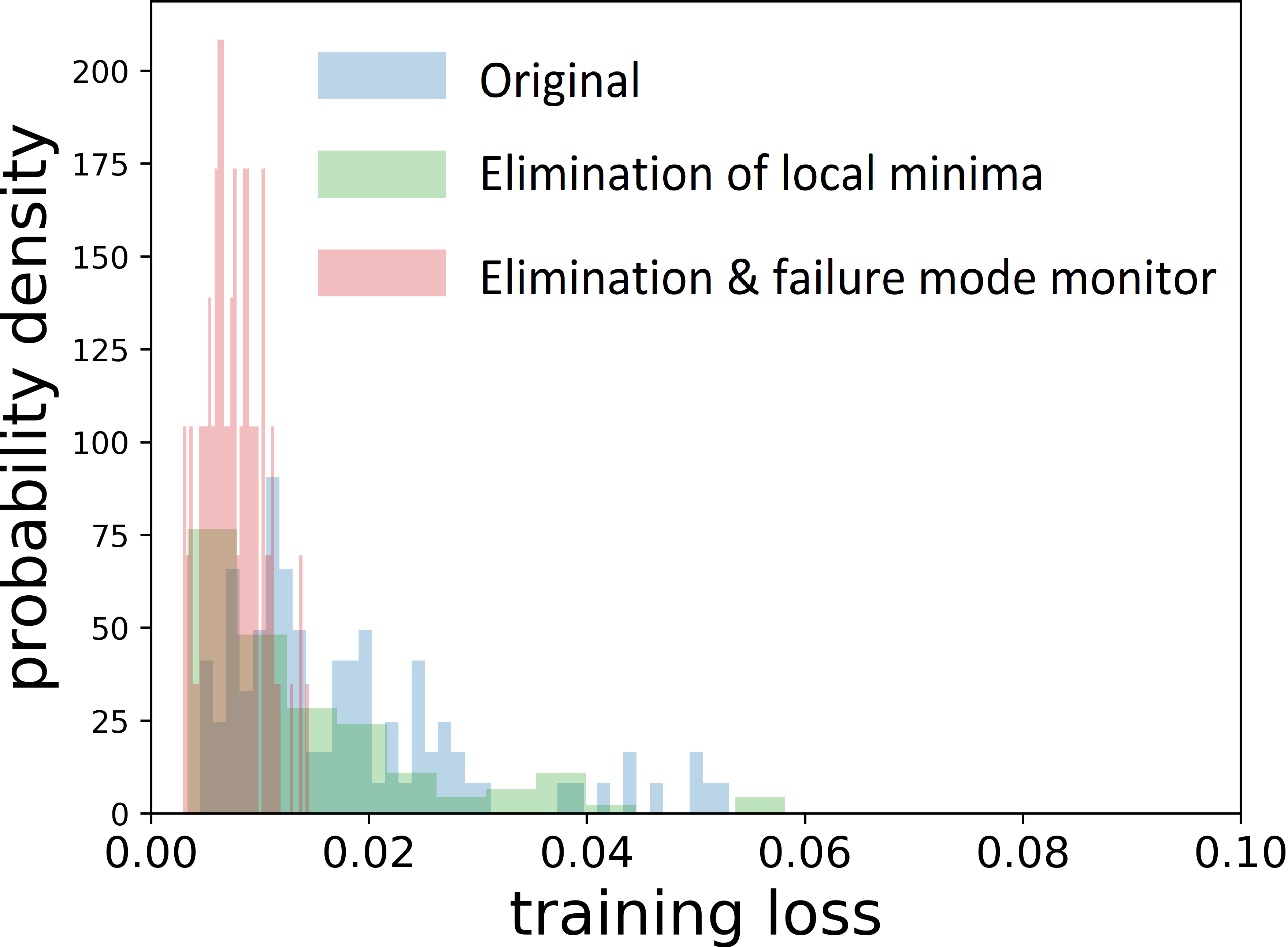}
  \caption{KMNIST}
\end{subfigure} 
\captionof{figure}{Histogram of loss values after training with original networks $f$ minimizing $L$ (original),   modified  networks $\tilde f$ minimizing $\tilde L$  (elimination of local minima), and modified  networks $\tilde f$ minimizing $\tilde L$ with the failure mode detector (elimination \& failure mode monitor). The plotted training loss values are the values of the standard training objective $L$ for both original networks $f$ (minimizing $L$) and modified  networks $\tilde f$ (minimizing $\tilde L$) with and without the failure mode detector. The elimination of local minima helped a gradient-based method for Semeion, and did not help it much for KMNIST. For KMNIST, the novel failure mode of the elimination was detected by monitoring the norms of $(a,b,W)$ to restart and search a better subspace.}
\label{fig:exp}
\end{figure*}

\section{Additional numerical and analytical examples to illustrate the failure mode} \label{sec:app:illustration}

Figure \ref{fig:limitation_hinge} illustrates the novel failure mode proven by Theorem  \ref{thm:limitation}.  The setting used for plotting Figure \ref{fig:limitation_hinge}  is exactly same as that in Figure \ref{fig:limitation}
(i.e., Example \ref{example:plot:square}) except that $\ell(f(x_{1};\theta),y_{1})=(f(x_{1};\theta)-y_{1})^2$ and $y_{1}=f(x_{1};0.8)$.

\begin{figure*}[h!]
\center
\begin{subfigure}[b]{0.3\textwidth}
  \includegraphics[width=\textwidth]{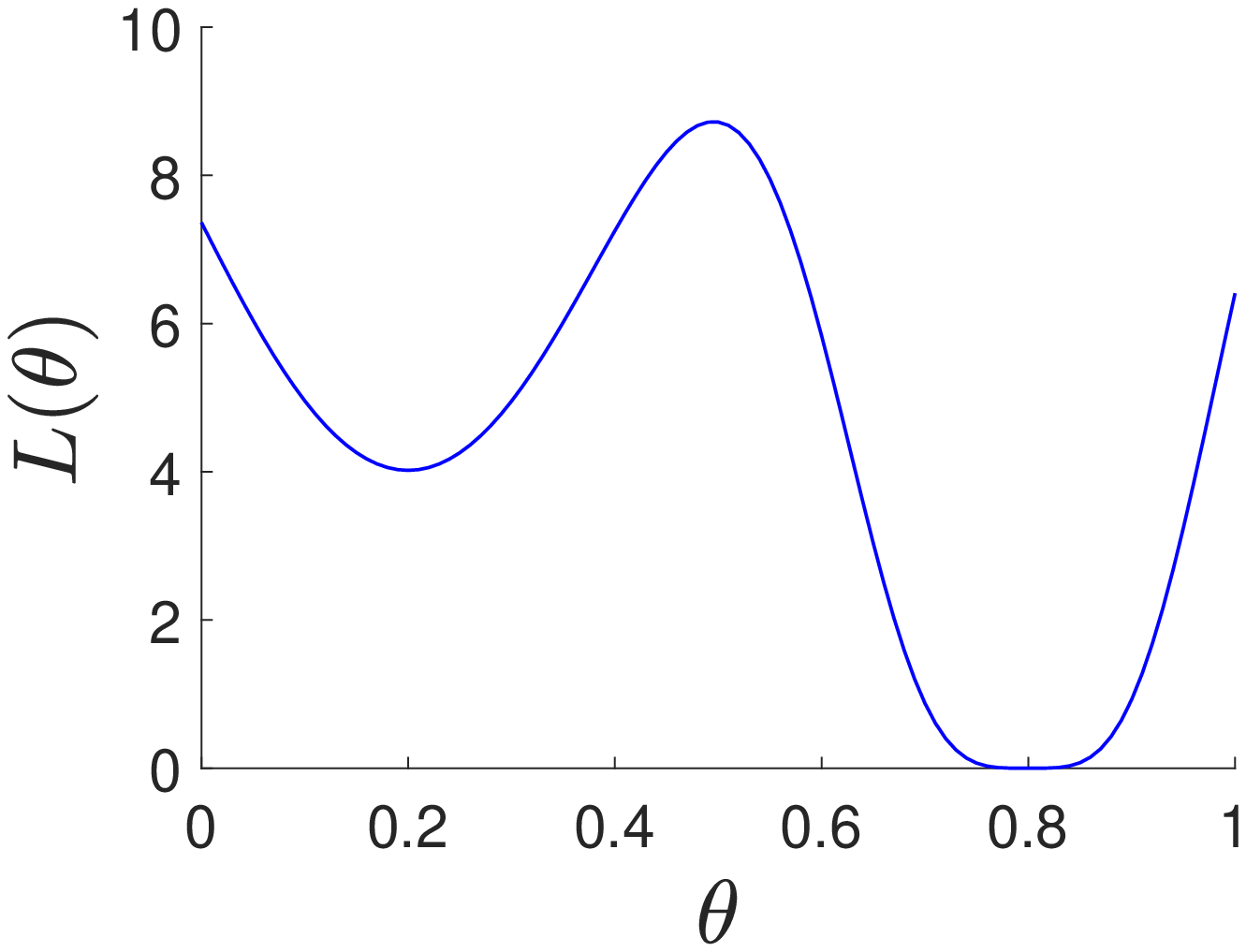} 
  \caption{original objective function $L$} 
\end{subfigure}
\begin{subfigure}[b]{0.33\textwidth}
  \includegraphics[width=\textwidth]{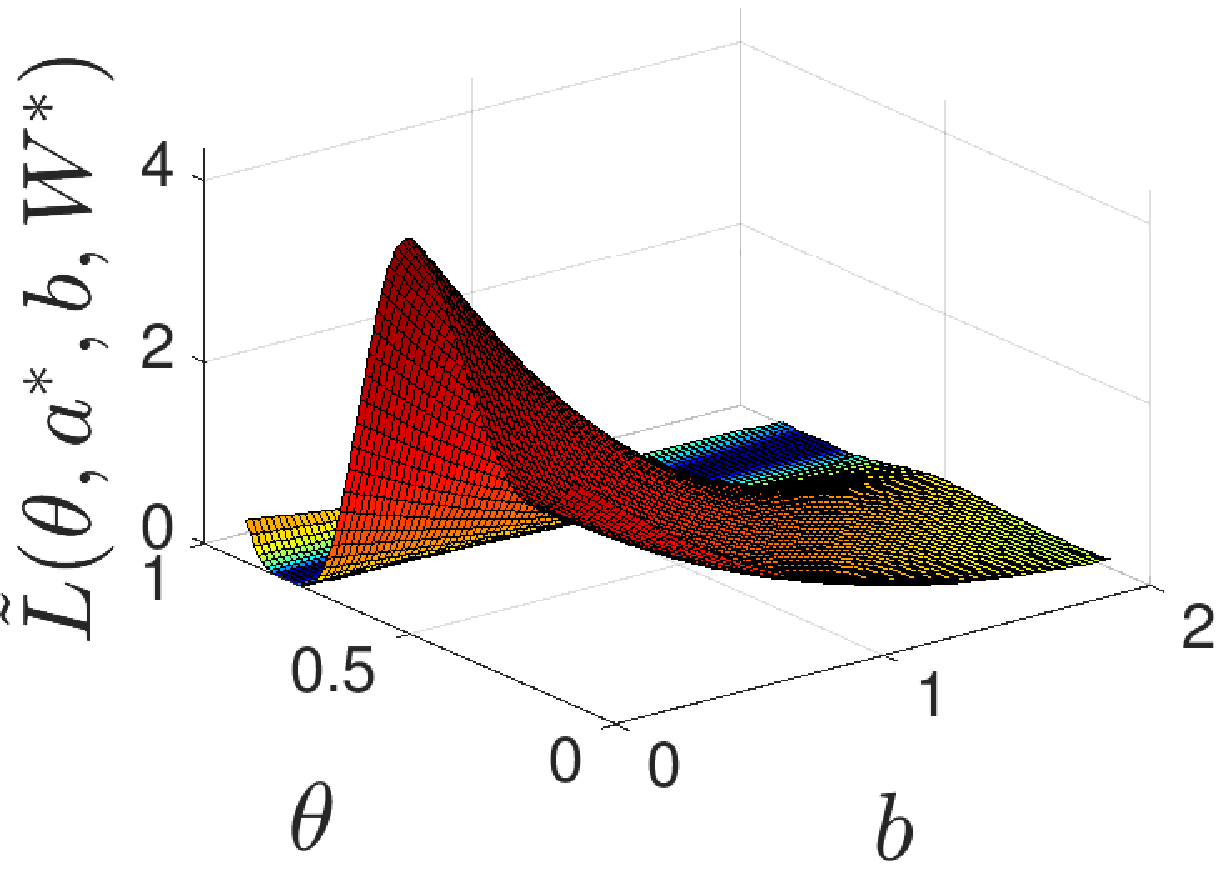}
  \caption{modified objective function  $\tilde L$}
\end{subfigure} 
\begin{subfigure}[b]{0.35\textwidth}
  \includegraphics[width=\textwidth]{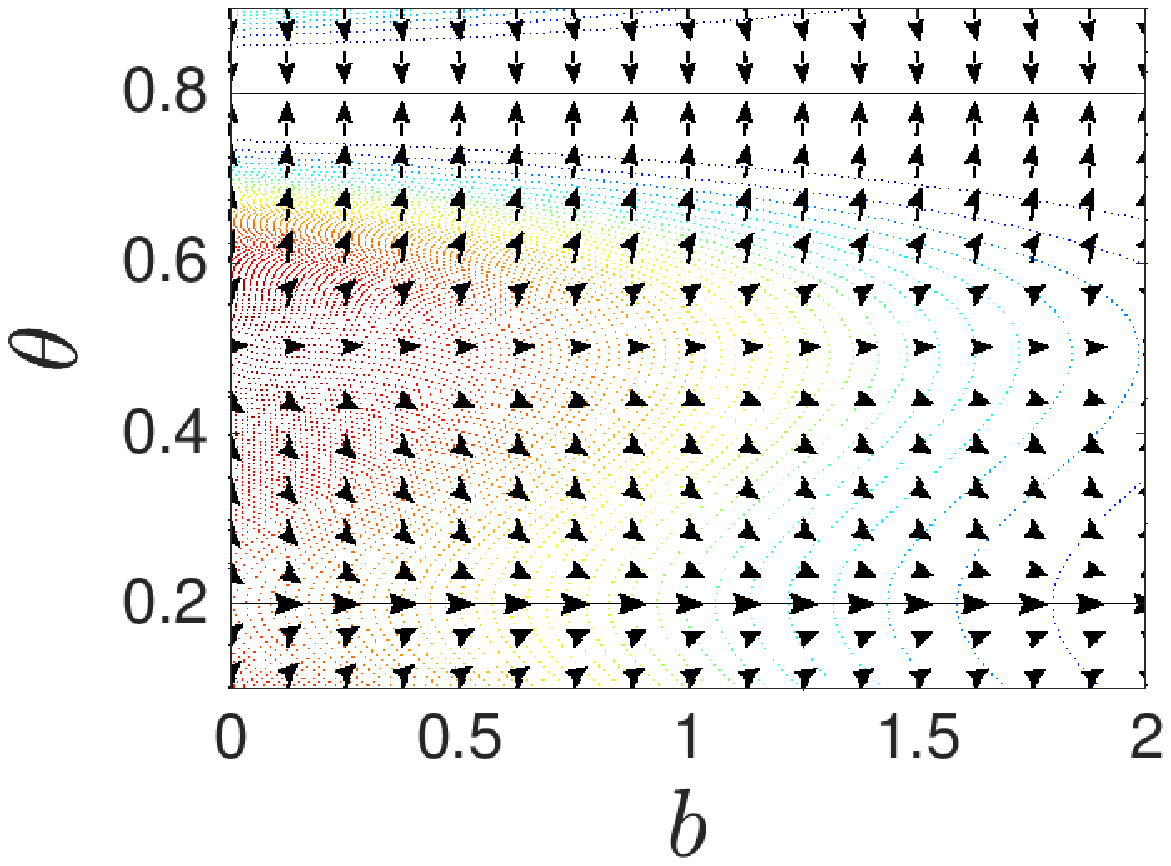}
  \caption{ negative gradient directions of   $\tilde L$}
\end{subfigure} 
\caption{Illustration of the failure mode suggested by Theorem \ref{thm:limitation} with the squared loss. The qualitatively identical behavior as that in Figure \ref{fig:limitation} can be observed.  }
\label{fig:limitation_hinge}
\end{figure*}

Examples \ref{example:2} and  \ref{example:5} illustrate the same phenomena as those in Examples \ref{example:1} and  \ref{example:3} with a smoothed hinge loss instead of the squared loss.

\begin{example} \label{example:2}
Let $m=1$ and $d_y=1$. In addition, $L(\theta)=\ell(f(x_{1};\theta),y_{1})=(\max(0,1-y_{1}f(x_{1};\theta))^3$. Accordingly, $\tilde L(\theta,a,b,W)=(\max(0,1-y_{1}  f(x_{1};\theta)-y_{1}a\exp(w\T x_1+b) )^3+\lambda a^2$. Let $\theta$ be a non-global minimum of $L$ as $f(x_{1};\theta) \neq y_{1}$, in particular, by setting $f(x_{1};\theta)=-1$ and $y_1=1$. Then, $L(\theta)=8$. 
If $(a,b,W)$ is a local minimum, we must have $a=0$ similarly to Example \ref{example:1}, yielding that
$
\tilde L(\theta,a,b,W)=8.
$
However, a point with $a=0$ is  not a local minimum, since with $a >0$ being sufficiently small, 
$$
\tilde L(\theta,a,b,W)=(2-a\exp(w\T x_1+b))^3+\lambda a^2<8.
$$ 
Hence, there is no local minimum $(a,b,W)\in \RR^{} \times \RR^{} \times \RR^{d_x }$ of $\tilde L|_{\theta}$.
Indeed, if we set $a= -2\exp(-1/\epsilon)$ and $b=1/\epsilon- w\T x_1$,
$
\tilde L(\theta,a,b,W)
=\lambda\exp(-2/\epsilon)\rightarrow0 
$
as $\epsilon \rightarrow 0$, and hence as $a \rightarrow 0^-$ and $b \rightarrow \infty$. This illustrates the case in which $(a,b)$ does not attain a solution in   $\RR \times \RR$. The identical conclusion holds with the general case of $f(x_{1};\theta) \neq y_{1}$ by following the same logic. 
\end{example}

\begin{example} \label{example:5}
Let $m=2$ and $d_y=1$. In addition, $L(\theta)=(\max(0,1-y_{1}f(x_{1};\theta))^3+(\max(0,1-y_{2}f(x_{2};\theta))^3$. Moreover, let $x_1 \neq x_2$. Finally, let $f(x_{1};\theta)=-1, f(x_{2};\theta)=1$, $y_1=1$, and $y_2=-1$. If $(a,b,W)$ is a local minimum, we must have $a=0$ similarly to Example \ref{example:1}, yielding $\tilde L(\theta,a,b,W)=16$. However, a point with $a=0$ is not a local minimum, which follows from the perturbations of $(a,W)$ in the same manner as in Example \ref{example:3}. 
Therefore, there is no local minimum $(a,b,W)$ of $\tilde L|_{\theta}$. Indeed, if we set $a=2 \exp(-1/\epsilon)$, $b=1/\epsilon-w\T x_1$, and $w=-\frac{1}{\epsilon} (x_2-x_1 )$,
\begin{align*}
\tilde L(\theta,a,b,W)=(2+2\exp(-\|x_2-x_1\|^2_2 /\epsilon ))^{3}+  \lambda\exp(-2/\epsilon)\rightarrow8
\end{align*}
as $\epsilon \rightarrow 0$, and hence as $a \rightarrow 0^-$, $b \rightarrow \infty$ and $\|w\|\rightarrow \infty $, illustrating the case in which $(a,b,W)$ does not attain a solution in   $\RR \times \RR \times \RR^{d_x}$.
\end{example}

\end{document}